\newcommand{\vect}[1]{\mathbf{#1}}
\newtheorem{thm}{Theorem}[]
\newtheorem{thmA}{Theorem}[]
\newtheorem{prop}{Proposition}[]
\newtheorem{propA}{Proposition}[]
\DeclareMathOperator*{\argmax}{arg\,max}
\DeclareMathOperator*{\argmin}{arg\,min}
\title{GA-SAM: Gradient-Strength based Adaptive Sharpness-Aware Minimization for Improved Generalization}
\author{Zhiyuan Zhang\textsuperscript{1}, Ruixuan Luo\textsuperscript{2}, Qi Su\textsuperscript{3, 1}, Xu Sun\textsuperscript{1} \\
  \textsuperscript{1}MOE Key Laboratory of Computational Linguistics, School of Computer Science, \\ Peking University\\
  \textsuperscript{2}Center for Data Science, Peking University\\
  \textsuperscript{3}School of Foreign Languages, Peking University\\
   \texttt{\{zzy1210,luoruixuan97,sukia,xusun\}@pku.edu.cn}}
\begin{document}
\maketitle
\begin{abstract}
Recently, Sharpness-Aware Minimization (SAM) algorithm has shown state-of-the-art generalization abilities in vision tasks. It demonstrates that flat minima tend to imply better generalization abilities. However, it has some difficulty implying SAM to some natural language tasks, especially to models with drastic gradient changes, such as RNNs. In this work, we analyze the relation between the flatness of the local minimum and its generalization ability from a novel and straightforward theoretical perspective. We propose that the shift of the training and test distributions can be equivalently seen as a virtual parameter corruption or perturbation, which can explain why flat minima that are robust against parameter corruptions or perturbations have better generalization performances. On its basis, we propose a Gradient-Strength based Adaptive Sharpness-Aware Minimization (GA-SAM) algorithm to help to learn algorithms find flat minima that generalize better. Results in various language benchmarks validate the effectiveness of the proposed GA-SAM algorithm on natural language tasks.
\end{abstract}

\section{Introduction}

Recently, researchers~\citep{Adversarial_data_Weight_Perturbation,model_robustness_Against_Weight_Perturbations,parameter_corruption,multi-step-defense,SAM,SAM-LB} propose that for better generalization ability, learning algorithms should find flat minima that have better robustness resistant to parameter corruptions or perturbations. Many learning algorithms that take the flatness or sharpness of the parameters into consideration are motivated by the observation that flat minima tend to imply better generalization abilities. Among them, Sharpness-Aware Minimization (SAM)~\citep{SAM} algorithm has achieved state-of-the-art generalization abilities in vision tasks. It adopts virtual adversarial parameter corruptions or perturbations during training and lowers the risk after parameter corruptions. However, traditional SAM algorithms usually adopt fixed strengths of parameter corruptions and constraint the corruptions with $L_2$-norm or $L_{+\infty}$-norm balls. It cannot conduct flexible strengths of parameter corruptions for different parameters, or during different stages of training. Thus, it is difficult to apply SAM to some natural language tasks, especially to models with drastic gradient changes, such as RNNs. To settle this issue, many adaptive SAM algorithms~\citep{ASAM,SAM-LB} are proposed empirically. In this work, we propose a gradient-strength based adaptive solution based on our theoretical framework.

Existing studies~\citep{Adversarial_data_Weight_Perturbation,multi-step-defense} try to explain the relation between the flatness of the local minimum and its generalization ability according to Probably Approximately Correct (PAC) Bayesian generalization bounds~\citep{Bayes_bound}. In this work, we propose a novel theoretical framework to analyze this relation from a more intuitive and direct perspective. In the Distributionally Robust Optimization (DRO)~\citep{DRO_review} field, the elementary assumption is that there exists a shift between the distributions of the training set and the test set. We propose that a small distribution shift can be equivalently seen as a virtual parameter corruption or perturbation on the loss function. We conduct analytic trials to verify our theoretical account and the results show that it fits the simulation well and can therefore explain why flat minima that are robust against parameter corruptions or perturbations have better generalization performances. We also analyze the strength of the parameter corruption within this framework, based on which we propose a Gradient-Strength based Adaptive Sharpness-Aware Minimization (GA-SAM) algorithm, which can set flexible strengths of parameter corruptions for different parameter groups, during different training stages. 

To validate the effectiveness of the proposed GA-SAM  algorithm, we choose several natural language models and benchmarks, including Convolution Neural Networks (CNN)~\citep{TextCNN} on text classification, Long Short-term Memory (LSTM)~\citep{merityRegOpt} networks on language modeling, and Transformer~\citep{transformer} on neural machine translation. We also compare our proposed GA-SAM algorithm with the traditional SAM algorithm~\citep{SAM} and its multiple variants, including multi-step adversarial parameter defense algorithm~\citep{multi-step-defense}, adaptive SAM~\citep{ASAM}, layer-wise SAM~\citep{SAM-LB} and other possible variants of our proposed algorithm. Experimental results show that our proposed GA-SAM gains better generalization compared to the traditional SAM algorithm and other variants.

Our contributions can be summarized as follows:
\begin{itemize}
    \item We propose a novel theoretical framework to analyze the relation between the flatness of the local minimum and its generalization ability. Under our proposed theoretical framework, the shift of the training and test distributions can be equivalently seen as a virtual parameter corruption or perturbation. Thus, the flatness or the robustness against parameter corruptions can indicate the generalization ability.
    \item On the basis of our novel framework, we further propose a Gradient-Strength based Adaptive Sharpness-Aware Minimization (GA-SAM) algorithm to set flexible strengths of parameter corruptions for different parameter groups, during different stages of training for an improvement over generalization ability. 
    \item Experimental results show the effectiveness of the GA-SAM algorithm compared to the traditional SAM algorithm and its variants.
\end{itemize}

\section{Proposed Theoretical Framework}

In this section, we propose a novel theoretical framework to reveal the relation between distribution shifts and parameter corruptions from an intuitive and direct theoretical perspective.

\subsection{Preliminary}

Let us consider a neural network with the parameter vector $\vect{w}\in \mathbb{R}^n$. For a data instance $\vect{z}=(\vect{x},y)$, denote $\ell(\vect{w};\vect{z})$ as the loss of the data instance, $\mathcal{L}(\vect{w};\mathcal{D})$ as the average loss of a dataset $\mathcal{D}$, and $p(\vect{z})$ as the probability distribution of $\mathcal{D}$, we have:
\begin{align}
\mathcal{L}(\vect{w};\mathcal{D})=\mathbb{E}_{\vect{z}\sim p(\vect{z})}[\ell]=\int_{\vect{z}}p(\vect{z})\ell(\vect{w};\vect{z})d\vect{z}.
\end{align}

Denote $\bm\theta$ as the optimal parameter:
\begin{align}
\bm\theta=\argmin\limits_{\vect{w}}\mathcal{L}(\vect{w};\mathcal{D}),
\end{align}
and the Hessian matrix on $\bm\theta$ is ${\bm H}=\nabla^2_{\bm\theta}\mathcal{L}(\bm\theta;\mathcal{D})$.

Similarly, denote $\mathcal{D}^*$ and $p^*(\vect{z})$ as the test set and its distribution, $\bm\theta^*$ as its optimal parameter, and the Hessian matrix on $\bm\theta^*$ is ${\bm H}^*$. Define the parameter shift of the test and training minima as $\bm\delta=\bm\theta^*-\bm\theta$.

Suppose $n$ parameters are divided into $l$ groups and the $i$-th group has $n_{(i)}$ parameters (\textit{e.g.}, $l=1, n=n_{(1)}$ when the whole model adopt the same strength and we call it model-wise, $n=l, n_{(i)}=1$ when element-wise, $l$ is the layer number when layer-wise, $l$ is the filter number when filter-wise, \textit{etc.}), $\vect{w}=[\vect{w}_{(1)}^\text{T}, \cdots, \vect{w}_{(l)}^\text{T}]^\text{T}$ and $\vect{g}=\nabla_\vect{w}\mathcal{L}(\vect{w};\mathcal{D})=[\vect{g}_{(1)}^\text{T}, \cdots, \vect{g}_{(l)}^\text{T}]^\text{T}$, and $\bm\delta=[\bm\delta_{(1)}^\text{T}, \bm\delta_{(2)}^\text{T},\cdots, \bm\delta_{(i)}^\text{T}, \cdots, \bm\delta_{(l)}^\text{T}]^\text{T}$.

\subsection{The Distribution Shift between the Training and Test Sets}

The elementary assumption in the Distributionally Robust Optimization (DRO)~\citep{DRO_review} field is that there exists a small distributional shift between the training and test sets. Previous studies usually assume that the divergence or the distance of the training and test distributions is bounded by a constant, \textit{e.g.}, the Kullback-Leibler divergence~\citep{KL-kullback1951information}, $\text{KL}(p(\vect{z})||p^*(\vect{z}))\le$ Constant. In this work, more generally, we assume that the $f$-divergence~\citep{measures_entropy} $D_f$ of the distributions is bounded by $C_f$:
\begin{align}
    D_f\left(p^*(\vect{z})||p(\vect{z})\right)=\int_{\vect{z}}p(\vect{z})f\big(\frac{p^*(\vect{z})}{p(\vect{z})}\big)\le C_f,
\end{align}
where the function $f$ is convex and $f(1)=0$, its Taylor expansion should also satisfy  $f(1+x)=a_1x+a_2x^2+o(x^2), a_2\ne 0$. For example, for the Kullback-Leibler divergence (KL-div), $f(1+x)=(1+x)\log(1+x)=x+{x^2}/{2}+o(x^2)$.

\subsection{Parameter Corruptions as Results of Distribution Shifts}
\label{sec:Parameter_Corruptions}

We propose a novel theoretical framework to analyze this relation from a more intuitive and direct perspective. The main theoretical motivation is Theorem~\ref{thm:shift}. Proofs and details are in Appendix. 
\begin{thm}
The distribution shifts of datasets $\mathcal{D}$ and $\mathcal{D}^*$ can be equivalently treated as a parameter corruption near the corresponding minima,
\begin{align}
    \mathcal{L}(\bm\theta^*+\vect{v};\mathcal{D}^*)\approx \mathcal{L}(\bm\theta+\vect{v};\mathcal{D})+\text{Constant},
\end{align}
where Constant=$\mathcal{L}(\bm\theta^*;\mathcal{D}^*)-\mathcal{L}(\bm\theta;\mathcal{D})$. Let $\vect{a}=-\bm\delta$, when $\vect{w}$ is near $\bm\theta$ and $\bm\theta^*$, we have
\begin{align}
\mathcal{L}(\vect{w};\mathcal{D}^*)\approx \mathcal{L}(\vect{w}+\vect{a};\mathcal{D})+\text{Constant}.
\end{align}
\label{thm:shift}
\end{thm}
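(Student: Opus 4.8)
The plan is to reduce the statement to a comparison of the two Hessians $\bm{H}$ and $\bm{H}^*$ via second-order Taylor expansion, and then to control the discrepancy between them using the $f$-divergence bound. First I would expand each loss to second order about its own minimizer. Because $\bm\theta$ minimizes $\mathcal{L}(\cdot;\mathcal{D})$ and $\bm\theta^*$ minimizes $\mathcal{L}(\cdot;\mathcal{D}^*)$, the first-order terms vanish, $\nabla_\vect{w}\mathcal{L}(\bm\theta;\mathcal{D})=0$ and $\nabla_\vect{w}\mathcal{L}(\bm\theta^*;\mathcal{D}^*)=0$, leaving
\begin{align}
\mathcal{L}(\bm\theta+\vect{v};\mathcal{D}) &\approx \mathcal{L}(\bm\theta;\mathcal{D})+\tfrac{1}{2}\vect{v}^\text{T}\bm{H}\vect{v}, \\
\mathcal{L}(\bm\theta^*+\vect{v};\mathcal{D}^*) &\approx \mathcal{L}(\bm\theta^*;\mathcal{D}^*)+\tfrac{1}{2}\vect{v}^\text{T}\bm{H}^*\vect{v}.
\end{align}
Subtracting the quantity $\mathcal{L}(\bm\theta^*;\mathcal{D}^*)-\mathcal{L}(\bm\theta;\mathcal{D})$ identified as Constant, the first displayed identity is equivalent to the quadratic-form equality $\vect{v}^\text{T}\bm{H}^*\vect{v}\approx\vect{v}^\text{T}\bm{H}\vect{v}$, i.e. to $\bm{H}^*\approx\bm{H}$. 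The second identity then follows purely by the substitution $\vect{v}=\vect{w}-\bm\theta^*$, since $\bm\theta+(\vect{w}-\bm\theta^*)=\vect{w}-\bm\delta=\vect{w}+\vect{a}$, so no further work is needed once the first identity holds.

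The heart of the argument is therefore establishing $\bm{H}^*\approx\bm{H}$. I would write both Hessians as distributional averages, $\bm{H}=\int_\vect{z}p(\vect{z})\nabla^2\ell(\bm\theta;\vect{z})d\vect{z}$ and $\bm{H}^*=\int_\vect{z}p^*(\vect{z})\nabla^2\ell(\bm\theta^*;\vect{z})d\vect{z}$, and split their difference into a parameter-shift part (replacing $\bm\theta^*$ by $\bm\theta$ under $p^*$, controlled by $\norm{\bm\delta}$ times a Lipschitz constant of the Hessian in $\vect{w}$) and a distribution-shift part $\int_\vect{z}(p^*(\vect{z})-p(\vect{z}))\nabla^2\ell(\bm\theta;\vect{z})d\vect{z}$. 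Writing the density ratio as $p^*(\vect{z})/p(\vect{z})=1+r(\vect{z})$, the $f$-divergence bound is the key tool: expanding $f(1+r)=a_1r+a_2r^2+o(r^2)$ and using $\int_\vect{z}p(\vect{z})r(\vect{z})d\vect{z}=0$ (both densities integrate to one) yields $a_2\,\mathbb{E}_p[r^2]\approx D_f\le C_f$, so the mean-square density shift $\mathbb{E}_p[r^2]$ is at most $C_f/a_2$ and hence small. A Cauchy--Schwarz estimate then bounds the distribution-shift part by $\sqrt{\mathbb{E}_p[r^2]}\cdot\sqrt{\mathbb{E}_p[\norm{\nabla^2\ell}^2]}$, which is small whenever the divergence budget $C_f$ is small.

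I expect the main obstacle to be making the Hessian comparison quantitatively clean rather than merely heuristic. Two points need care. First, the vanishing of the first-order term $a_1\mathbb{E}_p[r]$ is exact, but the cross interaction between the small parameter shift $\bm\delta$ and the small density shift $r$ must be verified to be second-order negligible, since $\bm\delta$ is itself induced by the distribution shift. Second, the Cauchy--Schwarz step tacitly assumes a bounded second moment $\mathbb{E}_p[\norm{\nabla^2\ell}^2]$ of the per-instance Hessian, so the argument really requires a mild smoothness or boundedness hypothesis on $\ell$ that the statement leaves implicit. Granting such regularity, both contributions are of order $O(\norm{\bm\delta})+O(\sqrt{C_f})$, which forces $\bm{H}^*\approx\bm{H}$ and thereby the claimed equivalence in the small-shift regime.
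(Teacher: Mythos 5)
Your proposal is correct and follows essentially the same route as the paper's own proof: both reduce the claim to showing $\bm H^*\approx\bm H$, split $\bm H^*-\bm H$ into a distribution-shift part (bounded via $\mathbb{E}_{p}[r^2]\lesssim C_f/a_2$ from the second-order expansion of $f$ and Cauchy--Schwarz against $\mathbb{E}_{p}[\|\nabla^2\ell\|_2^2]^{1/2}$) and a parameter-shift part (bounded by a Hessian Lipschitz constant times $\|\bm\delta\|_2\le C_{\bm\delta}$), and then obtain the second display by the substitution $\vect{w}=\bm\theta^*+\vect{v}$. The only cosmetic difference is that the paper makes the ``$\approx$'' precise by comparing the variation of the discrepancy $f(\vect{v})$ to the variation of $\mathcal{L}(\bm\theta+\vect{v};\mathcal{D})$, using the $\mu$-strong convexity to lower-bound the latter, which is exactly the regularity issue you flag.
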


It shows that the distribution shifts will cause a parameter corruption or parameter shift. Therefore, optimizing the parameter corruption risk $\mathcal{L}(\vect{w}+\vect{a};\mathcal{D})$ can help optimize the loss on the test set $\mathcal{L}(\vect{w};\mathcal{D}^*)$. 
Define $S$ as the possible corruption constraint set of potential corruptions $\vect{a}$. Since $\vect{a}=-\bm\delta$ is determined by the invisible distribution shifts, we optimize the risk under potential corruptions $\vect{a}$ instead, which is exactly the SAM optimization,
\begin{align}
    \bm\theta_\text{SAM} = \argmin\limits_{\vect{w}}\max\limits_{\vect{a}\in S}\mathcal{L}(\vect{w}+\vect{a};\mathcal{D}).
\end{align}

Thus, we reveal why flat minima that are robust against potential parameter corruptions or perturbations have better generalization performances in our theoretical framework. Traditional SAM algorithms adopt fixed strengths of parameter corruptions and constraint the corruptions with $L_2$-norm or $L_{+\infty}$-norm balls, namely $S=\{\vect{a}:\|\vect{a}\|_2\le\epsilon\}$ or $S=\{\vect{a}:\|\vect{a}\|_{+\infty}\le\epsilon\}$. However, in Theorem~\ref{thm:delta}, it reveals that the potential parameter corruption $\bm\delta$ is determined by the distribution shifts and the local geometry near the local minimum in the loss basin. Based on this, we have Proposition~\ref{prop:norm}.
\begin{thm}
Define $r(\vect{z})={{p}^*(\vect{z})}/{{p}(\vect{z})}-1$. When the distribution shift is small enough, namely $r(\vect{z})$ is small, we can estimate the parameter shift $\bm\delta$ as,
\begin{align}
\bm\delta = -{\bm H}^{-1}\mathbb{E}_{p}[r(\vect{z})\nabla_{\bm\theta}\ell(\bm\theta;\vect{z})]+o(\|\bm\delta\|_2).
\end{align}
\label{thm:delta}
\end{thm}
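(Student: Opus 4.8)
The plan is to combine the first-order optimality conditions at the two minima $\bm\theta$ and $\bm\theta^*$ with a single Taylor expansion, treating the density ratio $r(\vect{z})$ as the small parameter. First I would rewrite the test loss exactly in terms of the training loss: substituting $p^*(\vect{z})=p(\vect{z})(1+r(\vect{z}))$ into the definition of $\mathcal{L}(\cdot;\mathcal{D}^*)$ gives
\begin{align}
\mathcal{L}(\vect{w};\mathcal{D}^*)=\mathcal{L}(\vect{w};\mathcal{D})+\expect_{p}[r(\vect{z})\ell(\vect{w};\vect{z})].
\end{align}
Differentiating in $\vect{w}$ and evaluating at the test optimum $\bm\theta^*$, where $\nabla_{\vect{w}}\mathcal{L}(\bm\theta^*;\mathcal{D}^*)=0$ by stationarity, yields the exact identity
\begin{align}
0=\nabla_{\bm\theta^*}\mathcal{L}(\bm\theta^*;\mathcal{D})+\expect_{p}[r(\vect{z})\nabla_{\bm\theta^*}\ell(\bm\theta^*;\vect{z})].
\end{align}

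Next I would Taylor-expand both terms about the training optimum $\bm\theta$, recalling $\bm\delta=\bm\theta^*-\bm\theta$. For the first term I use the training stationarity $\nabla_{\bm\theta}\mathcal{L}(\bm\theta;\mathcal{D})=0$ together with the definition $\bm H=\nabla^2_{\bm\theta}\mathcal{L}(\bm\theta;\mathcal{D})$ to obtain $\nabla_{\bm\theta^*}\mathcal{L}(\bm\theta^*;\mathcal{D})=\bm H\bm\delta+o(\norm{\bm\delta}_2)$. For the second term, the leading contribution is $\expect_{p}[r(\vect{z})\nabla_{\bm\theta}\ell(\bm\theta;\vect{z})]$, with the next-order piece $\expect_{p}[r(\vect{z})\nabla^2_{\bm\theta}\ell(\bm\theta;\vect{z})]\bm\delta$ being a product of two small quantities. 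Substituting these expansions into the identity and solving for $\bm\delta$ — which is legitimate because $\bm H$ is invertible at a non-degenerate minimum — gives exactly $\bm\delta=-\bm H^{-1}\expect_{p}[r(\vect{z})\nabla_{\bm\theta}\ell(\bm\theta;\vect{z})]+o(\norm{\bm\delta}_2)$, the claimed estimate.

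The main obstacle is the asymptotic bookkeeping needed to justify that the cross term $\expect_{p}[r(\vect{z})\nabla^2_{\bm\theta}\ell(\bm\theta;\vect{z})]\bm\delta$ is genuinely absorbed into $o(\norm{\bm\delta}_2)$. This requires a self-consistency argument: the leading-order balance forces $\bm\delta=O(\norm{r})$, so the cross term is of order $\norm{r}\,\norm{\bm\delta}=O(\norm{\bm\delta}_2^2)=o(\norm{\bm\delta}_2)$, confirming that it may be discarded at this order. The only remaining care is justifying the interchange of differentiation and integration over $\vect{z}$, which I would handle under standard smoothness and dominated-convergence hypotheses on $\ell$, while noting that the leading coefficient $\expect_{p}[r(\vect{z})\nabla_{\bm\theta}\ell(\bm\theta;\vect{z})]$ captures precisely how the distribution shift tilts the training gradient away from its zero at $\bm\theta$.
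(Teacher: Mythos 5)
Your proposal is correct and follows essentially the same route as the paper's proof: both use the change-of-measure identity $p^*=p(1+r)$ together with the two stationarity conditions, Taylor-expand the training gradient to produce $\bm H\bm\delta$, and discard the cross term $\mathbb{E}_{p}[r(\vect{z})\nabla^2_{\bm\theta}\ell(\bm\theta;\vect{z})]\bm\delta$ as $o(\|\bm\delta\|_2)$ because $r$ is small. Your explicit self-consistency remark ($\bm\delta=O(\|r\|)$, so the cross term is $O(\|\bm\delta\|_2^2)$) is a slightly more careful justification of the same step the paper states tersely.
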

\begin{prop}
Suppose the loss $\mathcal{L}(\vect{w};\mathcal{D})$ is $\mu$-strongly convex\footnote{Note that $\mathcal{L}$ is only required to be $\mu$-strongly convex in the neighborhood of the loss basin including $\bm\theta$ and $\bm\theta^*$, instead of the entire $\mathbb{R}^n$.}, and $D_f(p^*||p)\le C_f$, there exists
\begin{align} 
C_{\bm\delta}=\frac{1+o(1)}{\mu}\sqrt{\frac{C_f}{a_2}\mathbb{E}_{p(\vect{z})}\big[\|\nabla_{\bm\theta}\ell(\bm\theta;\vect{z})\|_2^2\big]}
\end{align}
such that $\|\bm\delta\|_2\le C_{\bm\delta}$, namely $C_{\bm\delta}$ is a upper bound.
\label{prop:norm}
\end{prop}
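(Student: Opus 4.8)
The plan is to start from the leading-order expression for the parameter shift furnished by Theorem~\ref{thm:delta}, namely $\bm\delta = -\bm H^{-1}\mathbb{E}_p[r(\vect{z})\nabla_{\bm\theta}\ell(\bm\theta;\vect{z})] + o(\|\bm\delta\|_2)$, and to bound its Euclidean norm term by term. First I would invoke $\mu$-strong convexity: on the loss basin the smallest eigenvalue of $\bm H = \nabla^2_{\bm\theta}\mathcal{L}(\bm\theta;\mathcal{D})$ is at least $\mu$, so the operator norm of the inverse Hessian satisfies $\|\bm H^{-1}\|_2 \le 1/\mu$. Applying the triangle inequality and sub-multiplicativity then gives $\|\bm\delta\|_2 \le \frac{1}{\mu}\big\|\mathbb{E}_p[r(\vect{z})\nabla_{\bm\theta}\ell(\bm\theta;\vect{z})]\big\|_2 + o(\|\bm\delta\|_2)$, which reduces the problem to controlling the gradient-weighted expectation.

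For that expectation I would apply Cauchy--Schwarz twice. Pulling the norm inside the expectation (Jensen's inequality for the convex map $\vect{u}\mapsto\|\vect{u}\|_2$) gives $\big\|\mathbb{E}_p[r\,\nabla_{\bm\theta}\ell]\big\|_2 \le \mathbb{E}_p\big[|r|\,\|\nabla_{\bm\theta}\ell\|_2\big]$, and a second Cauchy--Schwarz over the product $|r|\cdot\|\nabla_{\bm\theta}\ell\|_2$ yields $\mathbb{E}_p\big[|r|\,\|\nabla_{\bm\theta}\ell\|_2\big] \le \sqrt{\mathbb{E}_p[r^2]}\,\sqrt{\mathbb{E}_p[\|\nabla_{\bm\theta}\ell\|_2^2]}$. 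The second factor is exactly the gradient-variance quantity appearing in $C_{\bm\delta}$, so all that remains is to relate $\mathbb{E}_p[r^2]$ to the $f$-divergence budget $C_f$.

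This is where the $f$-divergence assumption enters. Writing $p^*/p = 1+r$ and substituting the Taylor expansion $f(1+x)=a_1x+a_2x^2+o(x^2)$ into $D_f(p^*\|p)=\int_{\vect{z}} p\,f(p^*/p)=\mathbb{E}_p[f(1+r)]$ gives $D_f(p^*\|p)=a_1\mathbb{E}_p[r]+a_2\mathbb{E}_p[r^2]+o(\mathbb{E}_p[r^2])$. The crucial observation is that the linear term vanishes: since both $p$ and $p^*$ integrate to one, $\mathbb{E}_p[r]=\int_{\vect{z}} p^* - \int_{\vect{z}} p = 0$. Hence in the small-shift regime $\mathbb{E}_p[r^2]=\frac{1}{a_2}D_f(p^*\|p)(1+o(1)) \le \frac{C_f}{a_2}(1+o(1))$, where $a_2>0$ follows from convexity of $f$ (so $f''(1)\ge 0$) together with the hypothesis $a_2\ne 0$. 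Chaining the three bounds produces $\|\bm\delta\|_2 \le \frac{1+o(1)}{\mu}\sqrt{\frac{C_f}{a_2}\mathbb{E}_p[\|\nabla_{\bm\theta}\ell\|_2^2]} + o(\|\bm\delta\|_2)$.

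The final step, and the place where care is most needed, is absorbing the additive $o(\|\bm\delta\|_2)$ inherited from Theorem~\ref{thm:delta}. Moving it to the left-hand side turns the inequality into $\|\bm\delta\|_2(1-o(1))\le C_{\bm\delta}$, and dividing through preserves the claimed form with the $1+o(1)$ prefactor. The main obstacle I anticipate is the bookkeeping of these asymptotic terms: one must confirm that a small distribution shift ($r$ small) indeed forces $\|\bm\delta\|_2$ small, so that the Theorem~\ref{thm:delta} remainder is genuinely subdominant, and that the $o(\mathbb{E}_p[r^2])$ error in the divergence expansion is uniform enough to be folded into a single $1+o(1)$ factor. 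Everything else is routine norm manipulation once the strong-convexity bound on $\bm H^{-1}$ and the vanishing of $\mathbb{E}_p[r]$ are in place.
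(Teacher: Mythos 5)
Your proposal is correct and follows essentially the same route as the paper's own proof: the change-of-measure identity $\mathbb{E}_p[r]=0$ killing the linear term in the Taylor expansion of $f$, the bound $\mathbb{E}_p[r^2]\le (1+o(1))C_f/a_2$, Cauchy--Schwarz on $\mathbb{E}_p[r\,\nabla_{\bm\theta}\ell]$, and strong convexity to control the Hessian (the paper writes this as $\mu\|\bm\delta\|_2\le\|\bm H\bm\delta\|_2$ rather than $\|\bm H^{-1}\|_2\le 1/\mu$, which is equivalent). Your explicit handling of the $o(\|\bm\delta\|_2)$ remainder is if anything more careful than the paper's, which silently folds it into the $1+o(1)$ prefactor.
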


\begin{figure}[!t]
    \includegraphics[width=0.95\linewidth]{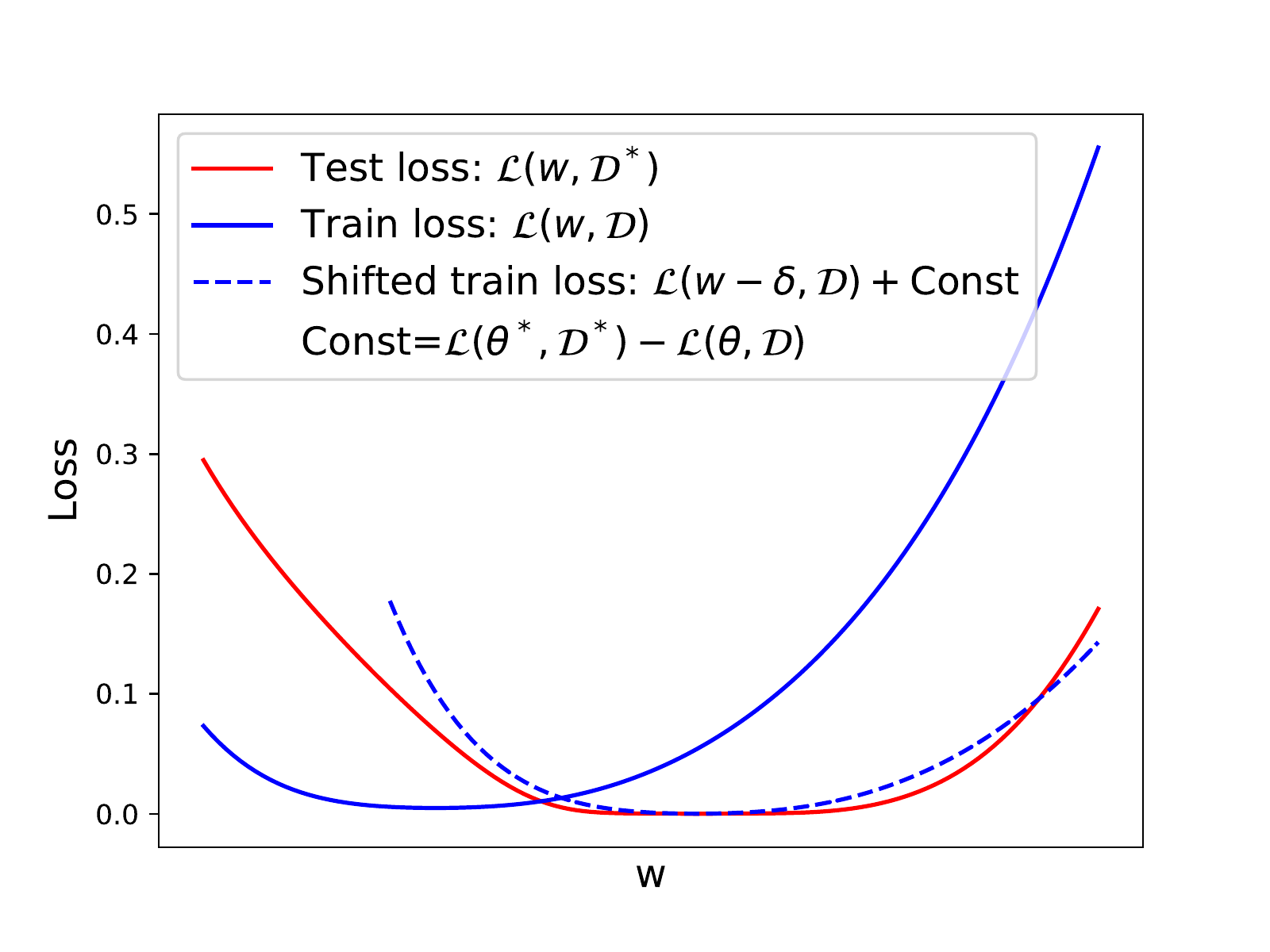}
    \caption{Visualizations of the training loss, test loss, and the shifted training loss. The shifted training loss is similar to the test loss near the local minimum.} 
    \label{fig:shift}
\end{figure}

\subsection{Verification of the Theoretical Framework}

In this section, we conduct several analytic trials\footnote{The details of trials are in Appendix.} to verify our theoretical explanations and analyze the corruption strength. Results show that our theoretical framework fits the simulation.

\textbf{Visualization of Distribution Shift Effects.}
Fig.~\ref{fig:shift} visualizes the training loss, test loss and the shifted training loss. The shifted training loss is similar to the test loss near the local minimum, namely we have $\mathcal{L}(\vect{w};\mathcal{D}^*)\approx\mathcal{L}(\vect{w}-\bm\delta;\mathcal{D})+$Constant, which validates Theorem~\ref{thm:shift}. This phenomenon can also be observed in visualizations of training and test loss landscapes in other studies.

\textbf{Relation between Corruption Strength and Distribution Shift Strength.}
We conduct analytic trials to reveal the relation between the corruption strength and the distribution shift to verify our theoretical framework. Suppose $\mathcal{D}$ is the training set and $\mathcal{D}^*$ is the test set. We can construct a mixed dataset $\mathcal{D}^\text{mix}$, mixed with $(1-\eta)$ of the training data from $\mathcal{D}$ and $\eta$ of the test data from $\mathcal{D}^*$. We have $p^\text{mix}=(1-\eta)p+\eta p^*$ approximately. Define $\eta$ as the relatively distribution shift strength between $\mathcal{D}^\text{mix}$ and $\mathcal{D}$. Proposition~\ref{prop:data} reveals that the corruption strength is proportional to the distribution shift strength, which fits both the simulation results of analytic trails in Fig.~\ref{fig:data} and the intuition.

\begin{prop}
\label{prop:data}
Suppose the mixed distribution of $\mathcal{D}^\text{mix}$ is $p^\text{mix}=(1-\eta)p+\eta p^*$, then we have $D_f(p^\text{mix}||p)\le C_f^\text{mix}=\eta^2 C_f$. Denote $\bm\theta^\text{mix}$ as the optimal parameter on $\mathcal{D}^\text{mix}$, $\bm\delta^\text{mix}=\bm\theta^\text{mix}-\bm\theta$, then we have:
\begin{align}
\frac{\|\bm\delta^\text{mix}\|_2}{\|\bm\delta\|_2}=\frac{C_{\bm\delta}^\text{mix}}{C_{\bm\delta}}=\eta+o(1).
\end{align}
\end{prop}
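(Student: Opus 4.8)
The plan is to treat both claims as consequences of substituting the mixed likelihood ratio into the machinery already developed in Theorem~\ref{thm:delta} and Proposition~\ref{prop:norm}, exploiting the single clean observation that mixing scales the relative density perturbation by exactly $\eta$. First I would compute the likelihood ratio of the mixed distribution against the training distribution: since $p^\text{mix}=(1-\eta)p+\eta p^*$, we have $p^\text{mix}(\vect{z})/p(\vect{z})=1+\eta r(\vect{z})$ with $r(\vect{z})=p^*(\vect{z})/p(\vect{z})-1$, exactly the quantity defined in Theorem~\ref{thm:delta}. To establish the divergence bound $D_f(p^\text{mix}||p)\le\eta^2C_f$, I would insert this ratio into the definition of the $f$-divergence and Taylor-expand using $f(1+x)=a_1x+a_2x^2+o(x^2)$, giving $D_f(p^\text{mix}||p)=\int p(\vect{z})\big(a_1\eta r(\vect{z})+a_2\eta^2 r(\vect{z})^2+o(\eta^2 r^2)\big)d\vect{z}$. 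The crucial fact is that the first-order term vanishes, because $\mathbb{E}_p[r]=\int(p^*-p)\,d\vect{z}=0$. Hence $D_f(p^\text{mix}||p)=a_2\eta^2\mathbb{E}_p[r^2]+o(\eta^2)$, and the identical expansion applied to $p^*$ gives $D_f(p^*||p)=a_2\mathbb{E}_p[r^2]+o(1)\le C_f$. Dividing, $D_f(p^\text{mix}||p)\approx\eta^2 D_f(p^*||p)\le\eta^2 C_f$, which is exactly $C_f^\text{mix}$.

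For the parameter-shift ratio I would reapply Theorem~\ref{thm:delta} verbatim, but with $p^*$ replaced by $p^\text{mix}$. The key point is that both the Hessian $\bm H$ and the gradient expectation are anchored at the training optimum $\bm\theta$ under the fixed distribution $p$, so neither changes under mixing; only the relative-density factor changes, from $r$ to $\eta r$. Linearity of the expectation then yields $\bm\delta^\text{mix}=-\bm H^{-1}\mathbb{E}_p[\eta r\nabla_{\bm\theta}\ell(\bm\theta;\vect{z})]+o(\|\bm\delta^\text{mix}\|_2)=\eta\big({-}\bm H^{-1}\mathbb{E}_p[r\nabla_{\bm\theta}\ell]\big)+o(\|\bm\delta^\text{mix}\|_2)$, which is exactly $\eta$ times the leading-order term of $\bm\delta$. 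Taking $\ell_2$ norms and dividing by $\|\bm\delta\|_2=\|{-}\bm H^{-1}\mathbb{E}_p[r\nabla_{\bm\theta}\ell]\|_2(1+o(1))$ gives $\|\bm\delta^\text{mix}\|_2/\|\bm\delta\|_2=\eta+o(1)$. For the bound ratio I would substitute $C_f^\text{mix}=\eta^2C_f$ into the formula of Proposition~\ref{prop:norm}; since the factor $\mathbb{E}_p[\|\nabla_{\bm\theta}\ell(\bm\theta;\vect{z})\|_2^2]$ and $\mu$ are unchanged, $C_{\bm\delta}^\text{mix}=\frac{1+o(1)}{\mu}\sqrt{\frac{\eta^2C_f}{a_2}\mathbb{E}_p[\|\nabla_{\bm\theta}\ell\|_2^2]}=\eta C_{\bm\delta}(1+o(1))$, and the $\sqrt{\eta^2}=\eta$ extraction delivers $C_{\bm\delta}^\text{mix}/C_{\bm\delta}=\eta+o(1)$.

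The main obstacle I anticipate is bookkeeping the error terms so that the $\eta$-scaling survives uniformly as the distribution shift shrinks. Specifically, Theorem~\ref{thm:delta} carries an $o(\|\bm\delta^\text{mix}\|_2)$ remainder, and I must confirm that this remainder is genuinely lower-order relative to the leading $\eta\bm\delta$ term rather than comparable to it; this requires the small-shift regime ($r$ small) to hold simultaneously for $p^*$ and for every $p^\text{mix}$, which is automatic since $\eta r$ is no larger in magnitude than $r$. A related care point is that the $o(1)$ in the ratio must be interpreted in the joint limit where the shift strength tends to zero, so that the Taylor remainders in both the divergence expansion and the implicit-function-style expansion of $\bm\delta$ are negligible; keeping $\mathbb{E}_p[r^2]$ and $\mathbb{E}_p[\|\nabla_{\bm\theta}\ell\|_2^2]$ fixed while the perturbation scale vanishes is what makes the leading-order proportionality exact.
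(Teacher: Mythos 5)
Your proposal is correct and follows exactly the same route as the paper: the paper's entire proof is the single observation $r^\text{mix}(\vect{z})=p^\text{mix}(\vect{z})/p(\vect{z})-1=\eta\, r(\vect{z})$, with the divergence bound, the linear scaling of $\bm\delta$ via Theorem~\ref{thm:delta}, and the scaling of $C_{\bm\delta}$ via Proposition~\ref{prop:norm} all left implicit. You have simply written out the substitutions that the paper omits, including the correct use of $\mathbb{E}_p[r]=0$ to kill the first-order term in the $f$-divergence expansion.
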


\begin{figure}[!t]
    \includegraphics[width=0.95\linewidth]{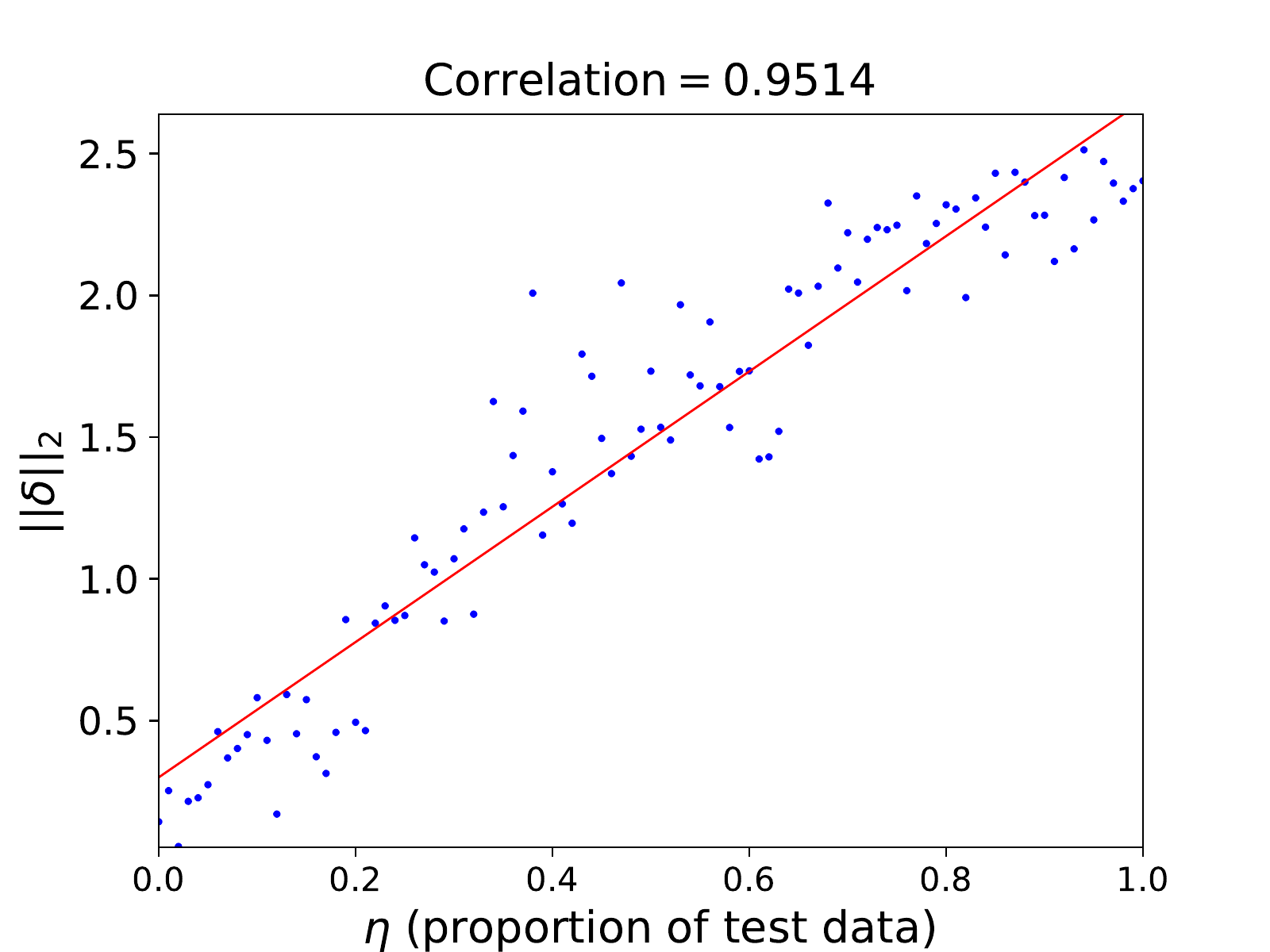}
    \caption{Results of $\|\bm\delta\|_2$ of 100 trials with the same training set $\mathcal{D}$ and different mixed test sets $\mathcal{D}^\text{mix}$ (mixed with $(1-\eta)$ of the training data from $\mathcal{D}$ and $\eta$ of the test data from $\mathcal{D}^*$). $\eta$ can be utilized to measure the strength of the distribution shift between $\mathcal{D}$ and $\mathcal{D}^\text{mix}$. Results show that there exists an approximately linear relationship between $\|\bm\delta\|_2$ and distribution shift strengths.} \label{fig:data}
\end{figure}

\section{Gradient-Strength based Adaptive Sharpness-Aware Minimization}

In this section, we propose a Gradient-Strength based Adaptive Sharpness-Aware Minimization (GA-SAM) algorithm based on the proposed theoretical framework, which conducts flexible strengths of parameter corruptions for different parameter groups for better generalization abilities.

\subsection{Adaptive Sharpness-Aware Minimization}

As illustrated in Section~\ref{sec:Parameter_Corruptions}, the SAM~\citep{SAM} optimization objective is exactly the risk under potential corruption as a result of distribution shift,
\begin{align}
    \bm\theta = \argmin\limits_{\vect{w}}\max\limits_{\vect{a}\in S}\mathcal{L}(\vect{w}+\vect{a};\mathcal{D}),
\end{align}
where the constraint $S$ is $S=\{\vect{a}:\|\vect{a}\|_p\le\epsilon\}$. 

Adaptive SAM algorithms~\citep{ASAM,SAM-LB} are proposed to set flexible strengths of parameter corruptions, which set the constraint $S$ as $S=\{\vect{a}:\|{\bm T}^{-1}\vect{a}\|_p\le\epsilon\}$, where ${\bm T}$ is the transformation matrix (usually diagonal) controlling the corruption strengths of corresponding parameter groups. Define ${\bm T}=\text{diag}\{T_{(1)}{\bm I}_{n_{(1)}}, \cdots, T_{(l)}{\bm I}_{n_{(l)}}\}$, where $T_{(i)}$ controls the corruption strengths of group $i$. Adaptive SAM (ASAM)~\citep{ASAM} empirically adopts $T_{(i)}=\|\vect{w}_{(i)}\|_2$ element-wisely or filter-wisely in CNNs, and layer-wise SAM (Layer-SAM)~\citep{SAM-LB} empirically adopts $T_{(i)}={\|\vect{w}_{(i)}\|_2}/{\|\vect{g}_{(i)}\|_2}$ layer-wisely.

\subsection{The Relation between the Corruption Strength and the Gradient Strength}

We hope to derive $\bm T$ from the theoretical framework instead of setting $\bm T$ empirically. 

In Theorem~\ref{thm:delta}, besides the term $r(\vect{z})$ determined by the distribution shift, corruption strength is also determined by the local geometry near the local minimum (${\bm H}, \nabla_{\bm\theta}\ell(\theta;\vect{z})$). Suppose $n$ is the parameter number,  $G=\mathbb{E}_{p(\vect{z})}[\|\nabla_{\bm\theta}\ell(\bm\theta;\vect{z})\|_2]/\sqrt{n}$ is the average gradient strength. Suppose the Fisher information matrix assumption~\citep{fisher_gradient} holds, namely $\bm H=\mathbb{E}_{p(\vect{z})}[\nabla_{\bm\theta}\ell(\bm\theta;\vect{z})\nabla_{\bm\theta}\ell(\bm\theta;\vect{z})^\text{T}]$, the local geometry is determined by the gradient strength $G$. 

In Proposition~\ref{prop:gradient}, we analyze the relation between the corruption strength and the gradient strength. We have $\|\bm\delta\|_2\propto {\sqrt{n}}/{G}$ and $\|\bm\delta_{(i)}\|_2 \propto {\sqrt{n_{(i)}}}/{G_{(i)}}$, corruption strengths and scales $\bm T$ should be determined by gradient strengths $G$.

\begin{prop}
\label{prop:gradient}
Define the average gradient strength as $G=\mathbb{E}_{p(\vect{z})}[\|\nabla_{\bm\theta}\ell(\bm\theta;\vect{z})\|_2]/\sqrt{n}$, and the average gradient strength of group $i$ as $G_{(i)}=\mathbb\mathbb{E}_{p(\vect{z})}[\|\nabla_{\bm\theta_{(i)}}\ell(\bm\theta;\vect{z})\|_2]/\sqrt{n_{(i)}}$, then 
\begin{align}
\|\bm\delta\|_2\propto \frac{\sqrt{n}}{G},\quad \|\bm\delta_{(i)}\|_2 \propto \frac{\sqrt{n_{(i)}}}{G_{(i)}}.
\end{align}
\end{prop}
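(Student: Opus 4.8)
The plan is to combine Proposition~\ref{prop:norm} with the Fisher information matrix assumption to pin down how the local geometry enters the corruption strength $\|\bm\delta\|_2$. From Proposition~\ref{prop:norm}, the upper bound $C_{\bm\delta}$ satisfies
\begin{align}
C_{\bm\delta}=\frac{1+o(1)}{\mu}\sqrt{\frac{C_f}{a_2}\mathbb{E}_{p(\vect{z})}\big[\|\nabla_{\bm\theta}\ell(\bm\theta;\vect{z})\|_2^2\big]},
\end{align}
so the distribution-shift factors $C_f$, $a_2$, and $\mu$ are treated as constants, and the whole geometry dependence reduces to the second-moment term $\mathbb{E}_{p(\vect{z})}[\|\nabla_{\bm\theta}\ell(\bm\theta;\vect{z})\|_2^2]$. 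First I would relate this second moment to the defined average gradient strength $G=\mathbb{E}_{p(\vect{z})}[\|\nabla_{\bm\theta}\ell(\bm\theta;\vect{z})\|_2]/\sqrt{n}$. Under a homogeneity/regularity assumption on the gradient norm across parameters (so that $\mathbb{E}_p[\|\nabla_{\bm\theta}\ell\|_2^2]\approx (\mathbb{E}_p[\|\nabla_{\bm\theta}\ell\|_2])^2$ up to a constant factor, i.e.\ a bounded coefficient of variation), I get $\mathbb{E}_p[\|\nabla_{\bm\theta}\ell\|_2^2]\propto n G^2$, hence $C_{\bm\delta}\propto \sqrt{n G^2}=\sqrt{n}\,G$. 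This alone would give $\|\bm\delta\|_2\propto\sqrt{n}\,G$, which is the \emph{inverse} of the claimed scaling, so the Fisher assumption must be doing real work.

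The crucial step is to feed in the Fisher information matrix assumption $\bm H=\mathbb{E}_{p(\vect{z})}[\nabla_{\bm\theta}\ell(\bm\theta;\vect{z})\nabla_{\bm\theta}\ell(\bm\theta;\vect{z})^\text{T}]$, which ties the curvature directly to the gradient magnitudes. The strong convexity constant $\mu$ is the smallest eigenvalue of $\bm H$, and under the Fisher assumption the eigenvalues of $\bm H$ scale like the gradient second moments, so $\mu\propto G^2$ (heuristically, $\mathrm{tr}(\bm H)=\mathbb{E}_p[\|\nabla_{\bm\theta}\ell\|_2^2]\propto n G^2$, and dividing by the $n$ eigenvalues gives an average eigenvalue of order $G^2$). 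Substituting $\mu\propto G^2$ and $\mathbb{E}_p[\|\nabla_{\bm\theta}\ell\|_2^2]\propto n G^2$ into $C_{\bm\delta}$ yields
\begin{align}
C_{\bm\delta}\propto\frac{1}{G^2}\sqrt{n G^2}=\frac{\sqrt{n}}{G},
\end{align}
which matches the claimed model-wise scaling $\|\bm\delta\|_2\propto\sqrt{n}/G$.

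For the group-wise statement $\|\bm\delta_{(i)}\|_2\propto\sqrt{n_{(i)}}/G_{(i)}$, I would localize the same argument to the block of $\bm H$ corresponding to group $i$. Assuming the cross-group (off-diagonal) Hessian blocks are negligible relative to the diagonal ones, the estimate in Theorem~\ref{thm:delta}, $\bm\delta=-\bm H^{-1}\mathbb{E}_p[r(\vect{z})\nabla_{\bm\theta}\ell]$, decouples into blocks $\bm\delta_{(i)}\approx-\bm H_{(i)}^{-1}\mathbb{E}_p[r(\vect{z})\nabla_{\bm\theta_{(i)}}\ell]$, and the per-group Fisher relation gives the same scaling with $n\to n_{(i)}$, $G\to G_{(i)}$. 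The main obstacle I anticipate is justifying the two suppressed assumptions rigorously: first, that $\mathbb{E}_p[\|\nabla_{\bm\theta}\ell\|_2^2]$ is controlled by $G^2$ up to an $n$-independent constant (requiring the gradient-norm distribution not to be heavy-tailed), and second, that the relevant eigenvalue of the Fisher-approximated Hessian scales as $G^2$ rather than being dominated by a few extreme eigenvalues. Because the statement is only a proportionality ($\propto$) with implicit constants absorbing $C_f$, $a_2$, and the coefficient-of-variation factors, I expect the proof to proceed as a scaling argument rather than a tight inequality, so the real content is bookkeeping which quantities are constant versus which carry the $n$ and $G$ dependence.
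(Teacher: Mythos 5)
Your argument is essentially the paper's: both rest on the Fisher assumption giving $\mathrm{tr}(\bm H)/n = \mathbb{E}_{p(\vect{z})}[\|\nabla_{\bm\theta}\ell(\bm\theta;\vect{z})\|_2^2]/n \propto G^2$ as the relevant eigenvalue scale, on the gradient-expectation term scaling as $\sqrt{n}\,G$, and on a block-diagonal Hessian assumption to localize the claim to each group. The only cosmetic difference is that you route through Proposition~\ref{prop:norm}'s bound and set $\mu\propto G^2$, whereas the paper works directly from ${\bm H}\bm\delta = -\mathbb{E}_{p}[r(\vect{z})\nabla_{\bm\theta}\ell(\bm\theta;\vect{z})]+o(\|\bm\delta\|_2)$ (Theorem~\ref{thm:delta}) and replaces $\|{\bm H}\bm\delta\|_2$ by $\lambda\|\bm\delta\|_2$ with $\lambda$ the average eigenvalue --- the same heuristic, with the same caveats you already flag.
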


\subsection{Proposed Algorithm}

Our proposed algorithm adopts the constraint $S=\{\vect{a}:\|{\bm T}^{-1}\vect{a}\|_p\le\epsilon\}$ in SAM learning, where $\bm T$ are adaptive scales derived from the theoretical framework, and we adopt the multi-step implementation.

In Proposition~\ref{prop:gradient}, we have $\|\bm\delta_{(i)}\|_2\propto {\sqrt{n_{(i)}}}/{G_{(i)}}$. To make the scale of the virtual corruptions $\vect{a}_{(i)}$ propotional to the scale of $\bm\delta_{(i)}$, we should ensure that $\|\vect{a}_{(i)}\|_2\propto\|\bm\delta_{(i)}\|_2$. Suppose $a={\|\vect{a}_{(i)}\|_2}/{\sqrt{n_{(i)}}}$ is the average scale of $\vect{a}_{(i)}$, we have $a \propto T_{(i)}\epsilon$, we should ensure that
\begin{align}
\|\vect{a}_{(i)}\|_2=a\sqrt{n_{(i)}}\propto\|\bm\delta_{(i)}\|_2\propto\frac{\sqrt{n_{(i)}}}{G_{(i)}},
\end{align}
and we can set $T_{(i)}\epsilon\sqrt{n_{(i)}}\propto{\sqrt{n_{(i)}}}/{G_{(i)}}$, therefore $T_{(i)}\propto{1}/{G_{(i)}}={\sqrt{n_{(i)}}}/\mathbb{E}_{p(\vect{z})}[\|\nabla_{\bm\theta_{(i)}}\ell(\bm\theta;\vect{z})\|_2]$. We use $\|\vect{g}_{(i)}\|_2$ to replace $\mathbb{E}_{p(\vect{z})}[\|\nabla_{\bm\theta_{(i)}}\ell(\bm\theta;\vect{z})\|_2]$, then our proposal is derived,
\begin{align}
T_{(i)}=\frac{\sqrt{n_{(i)}}}{\|\vect{g}_{(i)}\|_2\sqrt{n}}.
\end{align}

\begin{table*}[ht]
\centering
\setlength{\tabcolsep}{3pt}
\renewcommand{\baselinestretch}{1.2}
\small
\begin{tabular}{@{}cccccccc@{}}
\toprule
   \multirow{2}*{\textbf{Dataset}} &
  \multicolumn{2}{c}{\textbf{Approach}} & & \multicolumn{1}{c}{\textbf{IMDB (ACC)} } &   \multicolumn{1}{c}{\textbf{PTB-LM (PPL)}} & \multicolumn{1}{c}{\textbf{En-Vi (BLEU)} } & \multicolumn{1}{c}{\textbf{De-En (BLEU)} } \\
 \cmidrule{2-3}\cmidrule{5-8}
  & \multicolumn{2}{c}{\textbf{Base Model}} & & CNN & LSTM & Transformer & Transformer \\
  \midrule
  \textbf{Baseline} & w/o SAM & & & 84.42$\pm$0.12 & 86.70$\pm$0.54 & 30.60$\pm$0.21 & 35.41$\pm$0.13\\
  \midrule
  \multirow{3}*{{\bf Single-step}} & SAM &  & &  84.75$\pm$0.31 (+0.33) &  89.66$\pm$0.25 (+2.96) & 30.79$\pm$0.15 (+0.19) & 35.61$\pm$0.18 (+0.20)\\
 \cmidrule{2-3}\cmidrule{5-8}
  & ASAM & & & 85.05$\pm$0.22 (+0.63) & 90.08$\pm$0.24 (+3.38) & 30.81$\pm$0.24 (+0.21) & 35.56$\pm$0.17 (+0.15) \\
 \cmidrule{2-3}\cmidrule{5-8}
  & Layer-SAM & & & 85.27$\pm$0.19 (+0.83) & 89.82$\pm$0.10 (+3.12) & 30.70$\pm$0.27(+0.10) & 35.78$\pm$0.08 (+0.37) \\
  \midrule
  \multirow{2}*{{\bf Multi-step}}& Multi-step Defense & & & 84.87$\pm$0.15 (+0.45) & 84.74$\pm$0.42 (-1.96) & 30.95$\pm$0.12 (+0.35) & 35.86$\pm$0.13 (+0.45)\\
 \cmidrule{2-3}\cmidrule{5-8}
  & \bf Proposed GA-SAM & & & \textbf{86.11$\pm$0.22} (\textbf{+1.69}) & \textbf{84.52$\pm$0.26} (\textbf{-2.18}) & \textbf{31.15$\pm$0.29} (\textbf{+0.55}) & \textbf{35.95$\pm$0.15} (\textbf{+0.54})\\
\bottomrule
\end{tabular}
\caption{Results of baselines and different SAM algorithms. Results show the effectiveness of GA-SAM.}
\label{tab:results}
\end{table*}

Existing algorithms are mainly single-step based, we adopt the multi-step implementation inspired by multi-step adversarial parameter defense algorithm~\citep{multi-step-defense}, which optimizes
\begin{align}
    \bm\theta=\argmin\limits_{\vect{w}}\mathbb{E}_{\mathcal{B}}\left[\sum\limits_{k=0}^K\frac{\mathcal{L}(\vect{w}+\vect{a}_k;\mathcal{B})}{K+1}\right],
\label{eq:target}
\end{align}
where $\vect{a}_0=\vect{0}$. Suppose the $k$-th update of the corruption is $\vect{u}_k=\argmax_{\|{\bm T}^{-1}\vect{u}\|_p\le \eta}\vect{u}^{\text{T}}\vect{g}_{k-1}$, which is generated based on $\vect{a}_{k-1}$ and $\vect{g}_{k-1}=\nabla_\vect{w}\mathcal{L}(\vect{w}+\vect{a}_{k-1};\mathcal{B})$, where $\eta$ is the step size and following ~\citet{multi-step-defense}, we set $\eta={1.5\epsilon}/{K}$, then
\begin{align}
\vect{u}_{k}=\frac{\eta\big({\bm T}\text{sgn}(\vect{g}_{k-1})\big)\odot|{\bm T}\vect{g}_{k-1}|^\frac{1}{p-1}}{\||{\bm T}\vect{g}_{k-1}|^\frac{1}{p-1}\|_p}.
\end{align}

To get the $k$-th corruption $\vect{a}_k$, we project the updated corruption $\vect{a}_{k-1}+\vect{u}_k$ into the set $S$, 
\begin{align}
\vect{a}_{k}=\Pi_{S}(\vect{a}_{k-1}+\vect{u}_k),
\label{eq:ak}
\end{align}
and the solutions to the commonly adopted $L_2$-norm and $L_{+\infty}$-norm constraints are:
\begin{align}
\Pi_{\|{\bm T}^{-1}\vect{v}\|_2\le\epsilon}{(\vect{v})} =& \frac{\min\{\|{\bm T}^{-1}\vect{v}\|_2,\epsilon\}\vect{v}}{\|{\bm T}^{-1}\vect{v}\|_2};\\
\Pi_{\|{\bm T}^{-1}\vect{v}\|_{+\infty}\le\epsilon}{(\vect{v})} &= {\bm T}\text{clip}({\bm T}^{-1}\vect{v},-\epsilon, \epsilon).
\end{align}

To summarize, we adopt the adaptive scale $T_{(i)}$ according to the gradient strength and a multi-step implementation. It should be noted that even when $K=1$, our multi-step implementation, which optimizes $(\mathcal{L}(\vect{w})+\mathcal{L}(\vect{w}+\vect{a}_1))/2$, is different from the single-step SAM implementation, which optimizes $\mathcal{L}(\vect{w}+\vect{a}_1)$. However, when $K=1$, they have similar speeds since they both require to generate $\vect{a}_1$ and need two backward propagation processes. Our multi-step implementation, however, allows setting a larger $K$ for better generalization. 

We name the proposed algorithm as the Gradient-Strength based Adaptive SAM (\textbf{GA-SAM}). The algorithm is shown in Algorithm~\ref{alg:GASAM}. The proofs and theoretical details are in Appendix.

\begin{algorithm}[!t]
   \caption{GA-SAM Algorithm}
   \label{alg:GASAM}
\begin{algorithmic}[1]
    \REQUIRE Parameters $\vect{w}$; loss $\mathcal{L}$ and dataset $\mathcal{D}$; steps $K$; training iterations; batch size $|\mathcal{B}|$.
    \STATE Prepare batches $\{\mathcal{B}\}$ and initialize $\vect{w}$.
    \STATE Calculate ${\bm T}$ with $T_{(i)}=\frac{\sqrt{n_{(i)}}}{\|\vect{g}_{(i)}\|_2\sqrt{n}}$.
    \WHILE {Training}
    \STATE $\vect{a}_0\gets \vect{0}$.
    \STATE  Calculate the initial loss: $\mathcal{L}(\vect{w};\mathcal{B})$.
    \FOR {$k = 1$ to $K$}
    \STATE Get $\vect{u}_{k}=\frac{\eta\big({\bm T}\text{sgn}(\vect{g}_{k-1})\big)\odot|{\bm T}\vect{g}_{k-1}|^\frac{1}{p-1}}{\||{\bm T}\vect{g}_{k-1}|^\frac{1}{p-1}\|_p}$.
    \STATE Get $\vect{a}_{k}\gets \Pi_S(\vect{a}_{k-1}+\vect{u}_k)$ as Eq. (\ref{eq:ak}).
    \STATE  Calculate the risk: $\mathcal{L}(\vect{w}+\vect{a}_{k};\mathcal{B})$.
    \ENDFOR
    \STATE Update $\vect{w}$ as minimizing Eq. (\ref{eq:target}).
    \ENDWHILE
\end{algorithmic}
\end{algorithm}

\section{Experiments}

In this section, we report the tasks, datasets, and implementation details. Main results are in Table~\ref{tab:results}.

\subsection{Tasks and Datasets}

We adopt three typical neural networks and natural language tasks to validate the effectiveness of the proposed GA-SAM algorithm on NLP tasks. 

On the \textbf{text classification} task, we adopt Convolution Neural Networks (\textbf{CNN})~\citep{TextCNN} on the IMDb movie reviews dataset (\textbf{IMDB})~\citep{IMDB} with the accuracy (\textbf{ACC}) evaluation metric. On the \textbf{language modeling} task, we adopt Long Short-term Memory (\textbf{LSTM})~\citep{merityRegOpt} networks on the English Penn TreeBank (\textbf{PTB-LM})~\citep{PTB-LM} dataset with the perplexity (\textbf{PPL}) evaluation metric. On the neural machine \textbf{translation} task, we adopt the \textbf{Transformer}~\citep{transformer} model based on the Fairseq implementation~\citep{fairseq} on IWSLT 15 English-Vietnamese (\textbf{En-Vi})~\citep{2015iwslt} and IWSLT 14 German-English (\textbf{De-En})~\citep{2014iwslt} datasets with the \textbf{BLEU} score evaluation metric. Compared with the classification and language modeling tasks, the datasets of the machine translation task are relatively large. Other details are in Appendix.

\subsection{Implementations}

We implement our proposed GA-SAM algorithm with a multi-step risk minimization, and set layer-wise adaptive scales $T_{(i)}={\sqrt{n_{(i)}}}/{(\|\vect{g}_{(i)}\|_2\sqrt{n})}$. We also compare our proposed GA-SAM with other existing algorithms. The traditional SAM algorithm~\citep{SAM} adopts a single-step risk implementation and sets fixed scales. ASAM~\citep{ASAM} adopts a single-step risk implementation and sets element-wise adaptive scales $T_{(i)}=|w_{(i)}|$. Layer-SAM~\citep{SAM-LB} adopts a single-step risk implementation and sets layer-wise adaptive scales $T_{(i)}={\|\vect{w}_{(i)}\|_2}/{\|\vect{g}_{(i)}\|_2}$. The multi-step adversarial parameter defense algorithm~\citep{multi-step-defense} adopts a multi-step risk implementation and sets fixed scales. We also try to combine these techniques and implement other possible variants of GA-SAM, and we conduct an ablation study to compare GA-SAM with these variants. 

For a fair comparison, the settings of different SAM algorithms and variants are the same as the base models except for the SAM settings ($K, \epsilon, L_p, T_{(i)}$). We grid search the optimal SAM hyper-parameters for each algorithm. The details of base models and hyper-parameters are in Appendix.

\subsection{Main Results}
The main results are shown in Table~\ref{tab:results}. Evidently, our proposed GA-SAM leads to significant performance gains over the base models.

Single-step SAM algorithms cannot improve the performance of the LSTM base model on the language modeling task. SAM algorithms that set adaptive scales cannot improve the performance of the traditional SAM algorithm consistently. But it may help deal with drastic changes in gradient scales of different parameters or different learning phases in NLP tasks. 
The multi-step risk minimization generally outperforms SAM and helps improve the stability of learning and the generalization abilities of models.
GA-SAM can both achieve the adaptive scales deduced from the theoretical framework and help improve the stability and the generalization via the multi-step implementation. Thus, GA-SAM outperforms base models and other SAM algorithms.

\begin{table}[!t]
\centering
\setlength{\tabcolsep}{3pt}
\renewcommand{\baselinestretch}{1.2}
\small
\begin{tabular}{@{}ccc@{}}
\toprule
   {\textbf{Approach}} & {\textbf{IMDB (ACC)}} & {\textbf{PTB-LM (PPL)}} \\
   \midrule
   \textbf{Baseline} & 84.42$\pm$0.12 & 86.70$\pm$0.54\\
   \midrule
   \textbf{SAM} & 84.75$\pm$0.31 (+0.33) & 89.66$\pm$0.25 (+2.96) \\
   \midrule
   \textbf{GA-SAM} & \textbf{86.11$\pm$0.22} (\textbf{+1.69}) & \textbf{84.52$\pm$0.26} (\textbf{-2.18}) \\
   \midrule
    \textbf{Single-step}  & 85.86$\pm$0.25 (+1.44) & 89.97$\pm$0.27 (+3.27) \\
   \midrule
     \textbf{Element-wise} & 84.87$\pm$0.14 (+0.45) & 84.69$\pm$0.38 (-2.01)\\
     \textbf{Model-wise} & 85.32$\pm$0.21 (+0.90) & 84.55$\pm$0.39 (-2.15)\\
    \midrule
    \multicolumn{3}{l}{\textbf{Variants with other scales} $T_{(i)}$:} \\
    $1$ & 84.87$\pm$0.15 (+0.45) & 84.74$\pm$0.42 (-1.96)\\
    ${\|\vect{w}_{(i)}\|_2}/{\|\vect{g}_{(i)}\|_2}$ & 85.82$\pm$0.06 (+1.40) & 84.58$\pm$0.27 (-2.12)\\
    $1/{\|\vect{g}_{(i)}\|_2}$ & 85.81$\pm$0.23 (+1.39) & 84.90$\pm$0.32 (-1.80)\\
    ${\|\vect{w}_{(i)}\|_2}/{\sqrt{n_{({i})}}}$ & 85.06$\pm$0.07 (+0.64) & 85.04$\pm$0.15 (-1.66) \\
    $\|\vect{w}_{(i)}\|_2$ & 85.42$\pm$0.10 (+1.00) & 85.03$\pm$0.57 (-1.67) \\
\bottomrule
\end{tabular}
\caption{Results of the ablation study. GA-SAM ($K=1$) is compared to multiple variants. Results show that GA-SAM outperforms other variants, and gradient-strength based adaptive scales usually outperform other scales.}
\label{tab:ablation}
\end{table}

\section{Analysis}

In this section, we first conduct an ablation study and analyze the hyper-parameters. Then we illustrate the sharpness and Hessian spectra with GA-SAM and analyze the difference between CV and NLP learning to explain why NLP tasks need GA-SAM.

\begin{table}[!t]
\centering
\setlength{\tabcolsep}{2pt}
\renewcommand{\baselinestretch}{1.2}
\small
\begin{tabular}{@{}ccc@{}}
\toprule
   {\textbf{Approach}} & {\textbf{IMDB (ACC)}} & {\textbf{PTB-LM (PPL)}} \\
   \midrule
   \textbf{Baseline} & 84.42$\pm$0.12 & 86.70$\pm$0.54\\
   \midrule
    \multicolumn{3}{l}{\textbf{GA-SAM w/ diff. (with different)} $K$:} \\
    K = 1  & \textbf{86.11$\pm$0.22} (\textbf{+1.69}) & 84.52$\pm$0.26 ({-2.18}) \\
    K = 2  & 85.83$\pm$0.23 (+1.41) & 84.48$\pm$0.52 (-2.22) \\
    K = 3  & 84.97$\pm$0.23 (+0.55) & 84.14$\pm$0.29 (-2.56)\\
    K = 4  & 77.55$\pm$3.25 (-6.87) & \textbf{84.05$\pm$0.23} (\textbf{-2.65})\\
    K = 5  & 70.49$\pm$5.10 (-13.9) & 84.59$\pm$0.36 (-2.11)\\
     \midrule
    {\textbf{w/ diff.} $\epsilon\ (L_2)$:} & $\times 10^{-2}$ & $\times 10^{-3}$ \\
    $\epsilon$ = 0.1$\times$  & 84.93$\pm$0.27 (+0.51) & 85.24$\pm$0.40 (-1.46)\\
    $\epsilon$ = 0.5$\times$  & 85.11$\pm$0.22 (+0.69) & 84.90$\pm$0.09 (-1.80)\\
    $\epsilon$ = 1$\times$  & 85.36$\pm$0.26 (+0.94) & 84.91$\pm$0.17 (-1.79)\\
    $\epsilon$ = 5$\times$  & 85.62$\pm$0.70 (+1.20) & \textbf{84.69$\pm$0.40} (\textbf{-2.01})\\
    $\epsilon$ = 10$\times$  & \textbf{85.77$\pm$0.11} (\textbf{+1.35}) & 85.38$\pm$0.41 (-1.32)\\
    $\epsilon$ = 50$\times$  & 64.14$\pm$10.3 (-20.3) & 507.0$\pm$243 (+420)\\
    \midrule
    {\textbf{w/ diff.} $\epsilon\ (L_{+\infty})$:} & $\times 10^{-4}$ & $\times 10^{-5}$  \\
    $\epsilon$ = 0.2$\times$  & 84.91$\pm$0.89 (+0.49) & 85.12$\pm$0.55 (-1.58) \\
    $\epsilon$ = 0.5$\times$  & 85.54$\pm$0.18 (+1.12) & 85.19$\pm$0.86 (-1.51) \\
    $\epsilon$ = 0.8$\times$  & 85.63$\pm$0.15 (+1.21) & \textbf{84.52$\pm$0.26} (\textbf{-2.18}) \\
    $\epsilon$ = 1$\times$  & \textbf{86.11$\pm$0.22} (\textbf{+1.69}) & 85.06$\pm$0.51 (-1.64) \\
    $\epsilon$ = 2$\times$  & 85.74$\pm$0.18 (+1.32) & 85.20$\pm$0.94 (-0.96) \\
    $\epsilon$ = 5$\times$  & 65.96$\pm$2.74 (-18.5) & 85.87$\pm$1.19 (-0.83) \\
\bottomrule
\end{tabular}
\caption{Analysis of hyper-parameters. We implement GA-SAM with different $K$ and $\epsilon$ (under $L_2$ and $L_{+\infty}$).}
\label{tab:hyperparameter}
\end{table}

\subsection{Ablation Study}

We implement the single-step variants, element-wise or model-wise variants, and variants with other scales $T_{(i)}$ on IMDB and PTB-LM. The results of the ablation study are reported in Table~\ref{tab:ablation}. 

Experimental results show that layer-wise implementation outperforms element-wise or model-wise implementations and GA-SAM with the multi-step implementation ($K=1$) outperforms GA-SAM with the single-step implementation.

For adaptive scales, the multi-step adversarial parameter defense algorithm~\citep{multi-step-defense} adopts $T_{(i)}=1$. We also adopt $T_{(i)}={\|\vect{w}_{(i)}\|_2}/{\|\vect{g}_{(i)}\|_2}$ following \citet{SAM-LB}, and $T_{(i)}={\|\vect{w}_{(i)}\|_2}$ following \citet{ASAM}. We also try other variants with similar formulas. Experimental results show that GA-SAM outperforms other variants. Gradient-strength ($\|\vect{g}_{(i)}\|_2$) based adaptive scales usually outperform other adaptive scales, and adaptive scales can enhance the multi-step adversarial parameter defense algorithm generally, which validates our theoretical framework.

\begin{table}[!t]
\centering
\setlength{\tabcolsep}{2pt}
\renewcommand{\baselinestretch}{1.2}
\small
\begin{tabular}{@{}ccc@{}}
\toprule
   {\textbf{Corruption}} & {\textbf{Baseline}} & {\textbf{GA-SAM}} \\
   \midrule
   \textbf{w/o Corruption} & 30.60$\pm$0.21 & 31.15$\pm$0.29\\
     \midrule
    $L_2$, $\epsilon$ = 0.05  & 30.26 (-0.34)  & 30.90 (-0.25)\\
    $L_2$, $\epsilon$ = 0.1  & 29.36 (-1.24)  & 30.26 (-0.89) \\
    $L_2$, $\epsilon$ = 0.2  & 6.46 (-24.14)  & 17.87 (-13.28) \\
    \midrule
    $L_{+\infty}$, $\epsilon$ = 0.0001  & 30.30 (-0.30) & 30.72 (-0.43) \\
    $L_{+\infty}$, $\epsilon$ = 0.0002  & 29.95 (-0.65) & 30.67 (-0.48)\\
    $L_{+\infty}$, $\epsilon$ = 0.0005  & 5.33 (-25.27) & 29.14 (-2.01)  \\
\bottomrule
\end{tabular}
\caption{The parameter robustness of baselines and GA-SAM on En-Vi. Minima with GA-SAM are more robust.}
\label{tab:robustness}
\end{table}

\begin{figure}[!t]
    \includegraphics[width=0.9\linewidth]{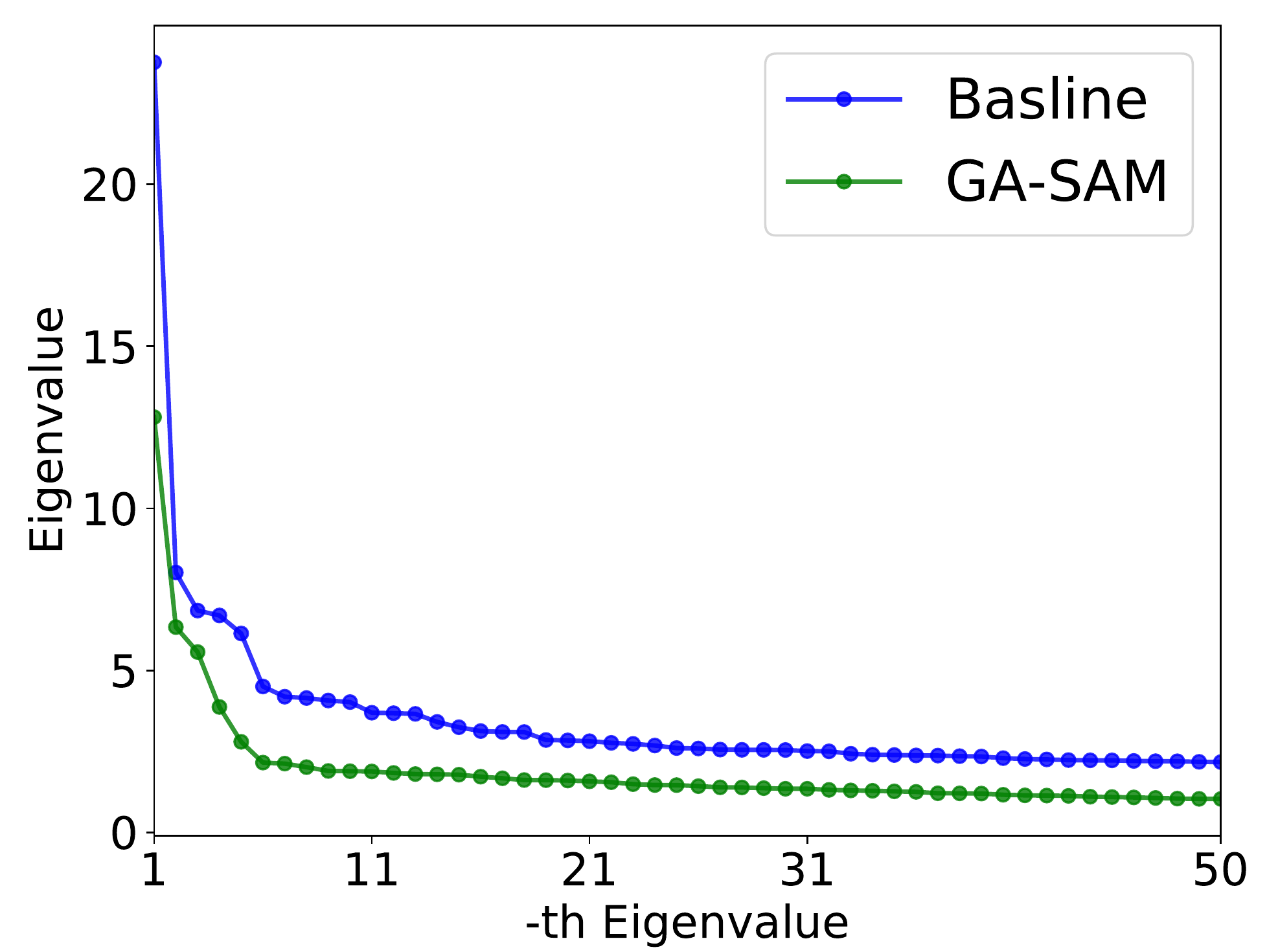}
    \caption{Top 50 eigenvalues of Hessians of baselines and GA-SAM. GA-SAM can help find flat minima.} 
    \label{fig:hessian}
\end{figure}

\subsection{Analysis of Hyper-parameters}

We analyze the influence of hyper-parameters and settings in GA-SAM learning in Table~\ref{tab:hyperparameter}. 

On IMDB, adopting larger $K$ cannot improve the accuracy, while on PTB-LM, larger $K$ can achieve better PPL. The reason might be that on PTB-LM, the LSTM model with more drastic gradient changes needs more steps for better stability.

Both under $L_2$ and $L_{+\infty}$ constraints, on both datasets, the performance can often be improved substantially with small $\epsilon$. However, when $\epsilon$ grows too large, it may harm the learning and the performance will drop. We also find that in this work, the best performance is achieved under $L_{+\infty}$.

\subsection{Sharpness and Parameter Robustness}

As analyzed in our theoretical framework, flat minima tend to imply better generalization abilities. In this section, we validate that GA-SAM can help find flat minima, which tends to help improve the generalization abilities of models. 

The sharpness near the local minima can be evaluated by the parameter robustness against parameter corruptions via the multi-step adversarial parameter corruption algorithm~\citep{multi-step-defense}. In Table~\ref{tab:robustness}, we evaluate the robustness of the baselines and models with GA-SAM against the $L_2$ or $L_{+\infty}$ constrained parameter corruptions on En-Vi. We can see that models with GA-SAM are more robust than baselines, which implies that GA-SAM can help find flat minima. We also adopt the Fisher information matrix assumption~\citep{fisher_gradient} to estimate the top 50 eigenvalues of the Hessian matrix to evaluate the sharpness of minima. As shown in Fig.~\ref{fig:hessian}, the eigenvalues of models with GA-SAM are lower, which illustrates that GA-SAM can help find flat minima.

\begin{table}[!t]
\centering
\setlength{\tabcolsep}{3pt}
\renewcommand{\baselinestretch}{1.2}
\small
\begin{tabular}{@{}ccc@{}}
\toprule
   {\textbf{Approach}} & {\textbf{En-Vi (BLEU)}} & {\textbf{De-En (BLEU)}} \\
   \midrule
   \textbf{Baseline} & 30.60$\pm$0.21 & 35.41$\pm$0.13\\
   \midrule
   \textbf{SAM} & 30.79$\pm$0.15 (+0.19) &35.61$\pm$0.18 (+0.20) \\
   \midrule
    \textbf{FreeLB} & 30.91$\pm$0.09 (+0.31) & 35.49$\pm$0.11 (+0.08)\\
   \midrule
   \textbf{GA-SAM} & \textbf{31.15$\pm$0.29 (+0.55)} & \textbf{35.95$\pm$0.15 (+0.54)} \\
\bottomrule
\end{tabular}
\caption{Comparisons to FreeLB.}
\label{tab:adversarial_examples}
\end{table}

\subsection{Comparisons to Adversarial Training.}

Some adversarial training algorithms improve the generalization ability of neural networks by optimizing the loss of virtual adversarial examples. In this section, we compare GA-SAM with FreeLB~\citep{FreeLB}, an existing high-performance algorithm for NLP tasks, on En-Vi and De-En. Detailed settings are in Appendix.

In Table~\ref{tab:adversarial_examples}, we can see that FreeLB~\citep{FreeLB} can improve the accuracy of NLP models. The reasons that FreeLB~\citep{FreeLB} works may lie in two aspects: (1) FreeLB adopts a multi-step minimization that is helpful for training stability; and (2) FreeLB only involves attacks on word embeddings while the ideal attack strengths for parameters in different layers vary a lot due to gradient vanishing and explosion in NLP models. Therefore, FreeLB does not need flexible scales as necessarily as SAM. However, GA-SAM can still outperform FreeLB.

\subsection{Why NLP Tasks Need GA-SAM}

Compared with the traditional SAM algorithm, GA-SAM adopts the multi-step risk minimization and gradient-strength based adaptive corruption strengths. From the ablation study, we can see that the multi-step risk minimization can enhance the traditional SAM algorithm on NLP tasks. It shows that NLP tasks do need GA-SAM for better stability and generalization with the multi-step risk minimization algorithm. The gradient-strength based adaptive corruption strengths can also enhance SAM algorithms since gradient strengths change drastically during different learning phases and gradient strengths vary in different layers.

\begin{figure}[!t]
    \includegraphics[width=0.95\linewidth]{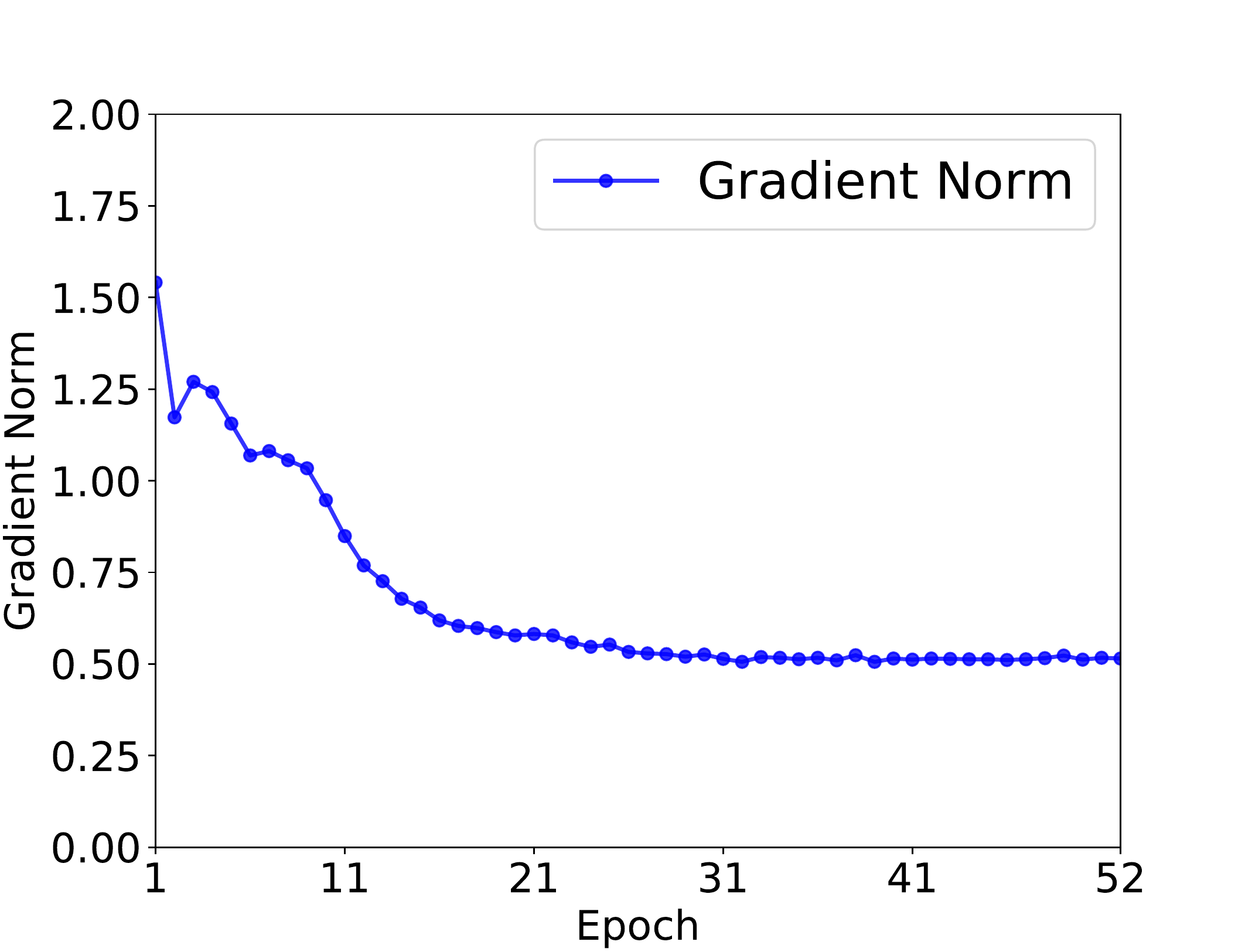}
    \caption{Gradient norms in different learning phases.} \label{fig:epoch}
\end{figure}

\begin{figure}[!t]
    \includegraphics[width=0.95\linewidth]{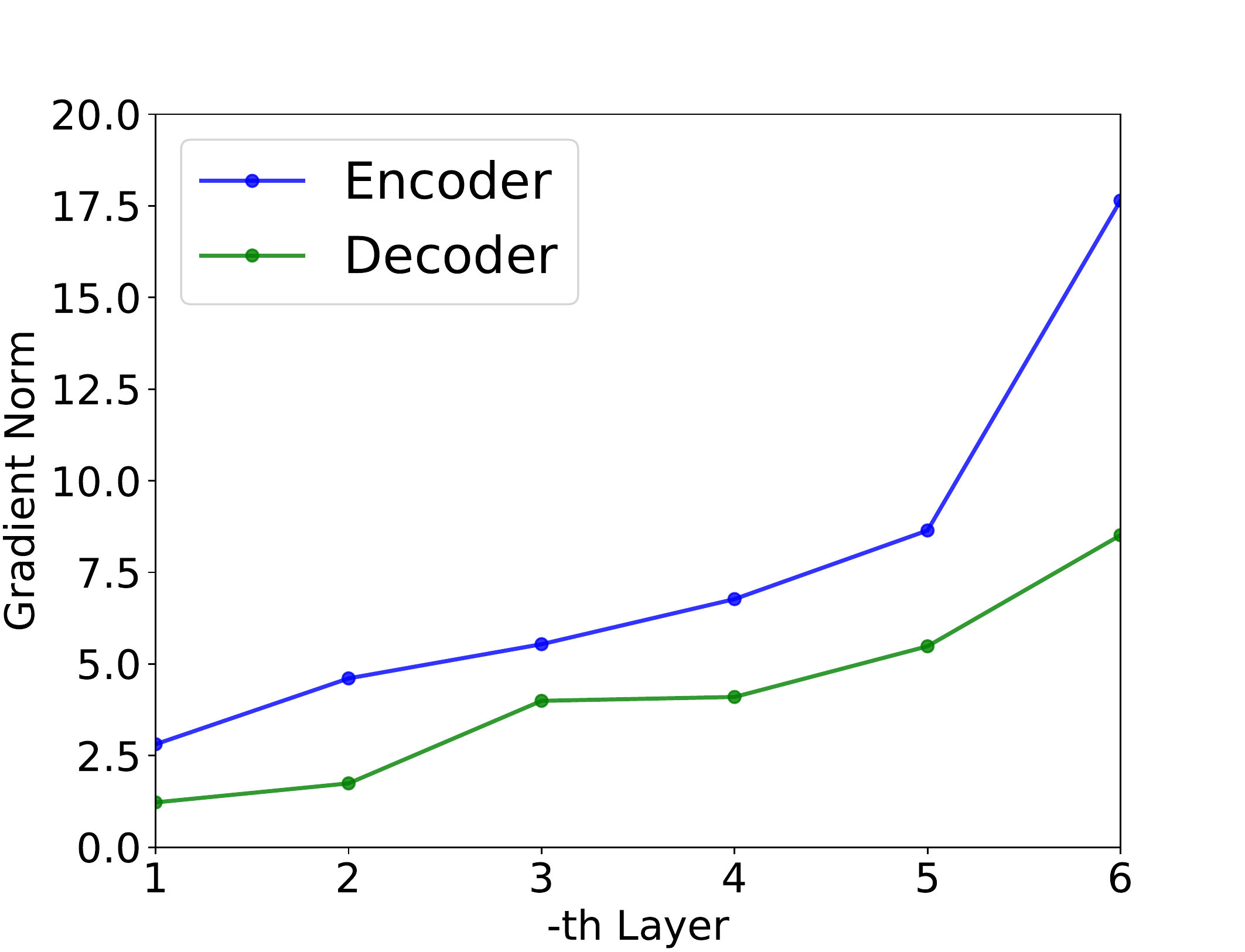}
    \caption{Gradient norms of different layers in Transformer encoder and decoder. 6-th layer is the highest layer in the encoder or decoder, which has the largest gradients.} \label{fig:layer}
\end{figure}

In this section, we further visualize the gradient strengths during different learning phases and different layers on En-Vi to illustrate why NLP tasks need GA-SAM. As shown in Fig.~\ref{fig:epoch}, during the early phase of learning, the gradient norms are much larger and we should conduct parameter corruptions with smaller strengths. As shown in Fig.~\ref{fig:layer}, we can see that higher layers of the Transformer encoder or decoder have larger gradients and need smaller parameter corruptions. To conclude, we need gradient-strength based adaptive corruption strengths since gradient strengths vary in different learning phases and different layers, while the traditional SAM algorithm fails to conduct flexible strengths of parameter corruptions for different parameters, or during different stages of training.

\subsection{Computational Complexity}

Single-step implementations usually involve two forward and backward propagations, including one for generating the corruption $\vect{a}$ and another for optimizing the $\mathcal{L}(\vect{w}+\vect{a})$ loss function or the $\mathcal{L}(\vect{w})+\lambda \vect{a}^\text{T}\nabla_\vect{w}\mathcal{L}(\vect{w})$ loss function). Multi-step implementations involve $K+1$ forward and backward propagations, including $K+1$ times for forwarding and backwarding $\{\mathcal{L}(\vect{w}+\vect{a}_k)\}_{0\le k\le K}$. 

Therefore, when $K=1$, the neural network forward and backward propagation cost (involving two forward and backward propagations) of GA-SAM is the same as SAM or GA-SAM with the single-step implementation. In Table~\ref{tab:ablation}, we conduct a fair comparison between GA-SAM ($K=1$) and SAM or GA-SAM with the single-step implementation (all involve two forward and backward propagations). GA-SAM still outperforms SAM and GA-SAM with the single-step implementation, which illustrates the effectiveness of our proposed GA-SAM and the multi-step implementation.

\section{Related Work}

\subsection{Parameter Corruptions or Perturbations}

Besides adversarial examples~\citep{Intriguing_properties_of_neural_networks,Adversarial_examples_in_the_physical_world,Towards_Evaluating_the_Robustness_of_Neural_Networks} and adversarial training~\citep{Explaining_and_Harnessing_Adversarial_Examples,Unified-min-max,Towards_Deep_Learning_Models_Resistant_to_Adversarial_Attacks,YOPO,FreeLB} concerning adversarial examples, existing studies also concerns small changes on neural network parameters, namely parameter corruptions~\citep{parameter_corruption,multi-step-defense} or perturbations~\citep{Can-AWP-inject-backdoor,Adversarial_data_Weight_Perturbation}. 

Existing studies research parameter corruptions or perturbations mainly for better generalization ability~\citep{Regularizing_NN_via_Adversarial_Perturbation_L2,SAM,ASAM,SAM-LB,efficient-SAM,parameter_corruption,multi-step-defense}, safety issue~\citep{Can-AWP-inject-backdoor,TBT}, analyzing the loss change allocation to parameters~\citep{LCA}, analyzing the compression~\citep{Stronger_Generalization_Compression} or parameter quantization~\citep{Data-Free-Quant,tensorRT,parameter_L1}.

\subsection{Generalization and Flat Minima} 

Existing studies~\citep{Sharp_minima_can_generalize_for_deep_nets,On_Large-Batch_Training,Entropy-SGD,diffusion_flat_minima,Adversarial_data_Weight_Perturbation,parameter_corruption,multi-step-defense} show that local minima that are robust against to parameter corruptions are usually flat minima, which tends to have better generalization ability. A line of Sharpness-aware Minimization (SAM)~\citep{SAM} algorithms drive parameters away from sharp minima via virtual parameter corruptions. Other researches acquire flat minima via adopting adaptive scales of parameter corruptions~\citep{ASAM,SAM-LB}, rescaling parameter corruptions~\citep{SAM-LB,efficient-SAM}, adopting a multi-step implementation~\citep{multi-step-defense} or sharpness-aware learning rates~\citep{SALR}.

\section{Limitation and Broader Impact}

\textbf{Limitation.} One limitation of our work is that, similar to other SAM learning, the hyper-parameters tuning, especially $\epsilon$, involves many numerical experiments, which is time costly and environmentally unfriendly. To settle this issue, we recommend researchers binary search the proper order of magnitude of $\epsilon$ first, and then grid search $\epsilon$ in a small range for a faster hyper-parameter search, instead of directly grid searching $\epsilon$ in a large range.

\textbf{Broader Impact.} Our work is beneficial for the security of NLP models since our work can help improve the robustness of NLP models against parameter corruptions, which can occur as random noises at the hardware level, quantization, or model compression. Our work also has negative social impacts. Our proposed GA-SAM can be utilized to enhance NLP models and improve the accuracy of base models. However, the hyper-parameters tuning, especially $\epsilon$, involves many numerical experiments, which is also a limitation of our work, and it is environmentally unfriendly.

\section{Conclusion}
In this paper, we propose a novel theoretical framework to analyze the relation between parameter corruptions and generalization abilities. Based on our proposed framework, we propose a Gradient-Strength based Adaptive Sharpness-Aware Minimization (GA-SAM) algorithm. Experimental results validate the effectiveness of GA-SAM compared to the traditional SAM algorithm and its variants. Further analyses also show that GA-SAM can help find flat minima and improve the generalization ability of neural networks.

\section*{Acknowledgement}
The authors would like to thank the reviewers for their helpful comments. This work is supported by Natural Science Foundation of China (NSFC) No. 62176002 and Beijing Natural Science Foundation of China (4192057). Xu Sun is the corresponding author.

\bibliography{acl}
\bibliographystyle{acl_natbib}

\appendix
\section{Theoretical Details}

\subsection{Proofs of Theorem~\ref{thmA:shift}}
\begin{thmA}
The distribution shifts of datasets $\mathcal{D}$ and $\mathcal{D}^*$ can be equivalently treated as a parameter corruption near the corresponding minima,
\begin{align}
    \mathcal{L}(\bm\theta^*+\vect{v};\mathcal{D}^*)\approx \mathcal{L}(\bm\theta+\vect{v};\mathcal{D})+\text{Constant},
\end{align}
where Constant=$\mathcal{L}(\bm\theta^*;\mathcal{D}^*)-\mathcal{L}(\bm\theta;\mathcal{D})$. Let $\vect{a}=-\bm\delta$, when $\vect{w}$ is near $\bm\theta$ and $\bm\theta^*$, we have
\begin{align}
\mathcal{L}(\vect{w};\mathcal{D}^*)\approx \mathcal{L}(\vect{w}+\vect{a};\mathcal{D})+\text{Constant}.
\end{align}
\label{thmA:shift}
\end{thmA}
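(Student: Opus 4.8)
The plan is to exploit that both $\bm\theta$ and $\bm\theta^*$ are minima of their respective averaged losses, so that the first-order terms in a local Taylor expansion vanish and the leading behaviour of each loss near its own minimum is a pure quadratic form governed by the Hessian. First I would record the stationarity conditions $\nabla\mathcal{L}(\bm\theta;\mathcal{D})=\vect{0}$ and $\nabla\mathcal{L}(\bm\theta^*;\mathcal{D}^*)=\vect{0}$, and then expand both losses to second order,
\begin{align}
\mathcal{L}(\bm\theta+\vect{v};\mathcal{D}) &= \mathcal{L}(\bm\theta;\mathcal{D}) + \tfrac12\vect{v}^\text{T}\bm{H}\vect{v} + o(\|\vect{v}\|_2^2),\\
\mathcal{L}(\bm\theta^*+\vect{v};\mathcal{D}^*) &= \mathcal{L}(\bm\theta^*;\mathcal{D}^*) + \tfrac12\vect{v}^\text{T}\bm{H}^*\vect{v} + o(\|\vect{v}\|_2^2),
\end{align}
where the linear terms drop out by stationarity and the hypothesis ``$\vect{w}$ near $\bm\theta$ and $\bm\theta^*$'' makes $\vect{v}$ small enough for the quadratic approximation to hold.

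The core of the argument is to show the two quadratic forms agree to leading order, i.e. $\bm{H}\approx\bm{H}^*$. I would split the discrepancy by its two sources, the moving base point ($\bm\theta\to\bm\theta^*=\bm\theta+\bm\delta$) and the changing averaging distribution ($p\to p^*$), writing $\bm{H}^*-\bm{H}=[\nabla^2\mathcal{L}(\bm\theta^*;\mathcal{D}^*)-\nabla^2\mathcal{L}(\bm\theta;\mathcal{D}^*)]+[\nabla^2\mathcal{L}(\bm\theta;\mathcal{D}^*)-\nabla^2\mathcal{L}(\bm\theta;\mathcal{D})]$. The first bracket is controlled by $\|\bm\delta\|_2$ via smoothness of the third derivative of the loss near the basin; the second bracket equals $\mathbb{E}_{p}[r(\vect{z})\nabla^2\ell(\bm\theta;\vect{z})]$ and is $O(\sup_{\vect{z}}|r(\vect{z})|)$. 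Since $\|\bm\delta\|_2$ is itself of the order of the shift (the quantitative version being Theorem~\ref{thm:delta}), both contributions vanish as the distribution shift shrinks, so $\bm{H}^*-\bm{H}=o(1)$ and the correction $\tfrac12\vect{v}^\text{T}(\bm{H}^*-\bm{H})\vect{v}$ is negligible relative to the retained quadratic term.

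Subtracting the constants $\mathcal{L}(\bm\theta;\mathcal{D})$ and $\mathcal{L}(\bm\theta^*;\mathcal{D}^*)$ and identifying $\tfrac12\vect{v}^\text{T}\bm{H}^*\vect{v}\approx\tfrac12\vect{v}^\text{T}\bm{H}\vect{v}$ then yields
\begin{align}
\mathcal{L}(\bm\theta^*+\vect{v};\mathcal{D}^*) - \mathcal{L}(\bm\theta^*;\mathcal{D}^*) \approx \mathcal{L}(\bm\theta+\vect{v};\mathcal{D}) - \mathcal{L}(\bm\theta;\mathcal{D}),
\end{align}
which is the first display with $\text{Constant}=\mathcal{L}(\bm\theta^*;\mathcal{D}^*)-\mathcal{L}(\bm\theta;\mathcal{D})$. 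For the second display I would change variables by setting $\vect{w}=\bm\theta^*+\vect{v}$, so that $\vect{v}=\vect{w}-\bm\theta^*$ and the right-hand argument becomes $\bm\theta+\vect{v}=\vect{w}-(\bm\theta^*-\bm\theta)=\vect{w}-\bm\delta=\vect{w}+\vect{a}$, using $\vect{a}=-\bm\delta$; substituting finishes the proof.

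I expect the main obstacle to be the justification of $\bm{H}\approx\bm{H}^*$, in particular making rigorous that both the base-point shift and the distribution shift perturb the Hessian only at higher order. This step implicitly needs the loss to be smooth enough (bounded third derivatives in the basin containing $\bm\theta$ and $\bm\theta^*$) and leans on Theorem~\ref{thm:delta} to tie $\|\bm\delta\|_2$ back to the size of $r(\vect{z})$; without such smoothness the quadratic approximations could break down and the two landscapes need not coincide even up to an additive constant.
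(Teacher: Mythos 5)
Your proposal follows essentially the same route as the paper's proof: a second-order expansion about each minimum with the linear terms killed by stationarity, the key lemma $\bm H^*-\bm H=o(1)$ obtained by splitting the Hessian discrepancy into a base-point-shift part (controlled by $\|\bm\delta\|_2$ via Lipschitzness of the instance-level Hessians) and a distribution-shift part, and a final change of variables $\vect{w}=\bm\theta^*+\vect{v}$. The only notable differences are cosmetic or slightly suboptimal: the paper bounds the distribution-shift term by Cauchy--Schwarz as $\mathbb{E}_{p}[|r|^2]^{1/2}\,\mathbb{E}_{p}[\|\nabla^2_{\bm\theta}\ell\|_2^2]^{1/2}$, tying it directly to the $f$-divergence budget $C_f$ rather than to $\sup_{\vect{z}}|r(\vect{z})|$ (a strictly stronger hypothesis you do not need), and it makes explicit the use of $\mu$-strong convexity to lower-bound the retained quadratic term $\tfrac12\vect{v}^\text{T}\bm H\vect{v}\ge\tfrac{\mu}{2}\|\vect{v}\|_2^2$, which your phrase ``negligible relative to the retained quadratic term'' implicitly requires.
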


\begin{proof}

Define $f(\vect{v})=\mathcal{L}(\bm\theta^*+\vect{v};\mathcal{D}^*)-\mathcal{L}(\bm\theta+\vect{v};\mathcal{D})$, namely $\mathcal{L}(\bm\theta^*+\vect{v};\mathcal{D}^*)= \mathcal{L}(\bm\theta+\vect{v};\mathcal{D})+f(\vect{v})$. First we prove there exists $C_{H}=\rho C_f^{\frac{1}{2}}+LC_{\bm\delta}=o(1)$ such that $\|\bm H^*-\bm H\|_2\le C_{H}$, where $\rho=\mathbb{E}_{p(\vect{z})}[\|\nabla_{\bm\theta}^2\ell(\bm\theta;\vect{z})\|_2^{2}]^\frac{1}{2}$, and $\|\nabla_{\bm\theta^*}^2\ell(\bm\theta^*;\vect{z})-\nabla_{\bm\theta}^2\ell(\bm\theta;\vect{z})\big\|_2\le L\|\bm\theta^*-\bm\theta\|_2$. We have,

\begin{align}
&\|\bm H^*-\bm H\|_2\\
&=\big\|\int_{\vect{z}}\big\{\big(p^*(\vect{z})-p(\vect{z})\big)\nabla_{\bm\theta}^2\ell(\bm\theta;\vect{z})\\
&+p^*(\vect{z})(\nabla_{\bm\theta^*}^2\ell(\bm\theta^*;\vect{z})-\nabla_{\bm\theta}^2\ell(\bm\theta;\vect{z})\big\}d\vect{z}\big\|_2\\
&\le \big\|\mathbb{E}_{p(\vect{z})}\big[\big(\frac{p^*(\vect{z})}{p(\vect{z})}-1\big)\nabla_{\bm\theta}^2\ell(\bm\theta;\vect{z})\big]\big\|_2\\
&+\big\|\mathbb{E}_{p^*(\vect{z})}\big[\nabla_{\bm\theta^*}^2\ell(\bm\theta^*;\vect{z})-\nabla_{\bm\theta}^2\ell(\bm\theta;\vect{z})\big]\big\|_2\\
&\le \big\|\mathbb{E}_{p(\vect{z})}\big[r(\vect{z})\nabla_{\bm\theta}^2\ell(\bm\theta;\vect{z})\big]\big\|_2+L\|\bm\delta\|_2\\
&\le \mathbb{E}_{p(\vect{z})}[|r(\vect{z})|^{2}]^\frac{1}{2}\mathbb{E}_{p(\vect{z})}\big[\|\nabla_{\bm\theta}^2\ell(\bm\theta;\vect{z})\|_2^{2}\big]^\frac{1}{2}\\
&+LC_{\bm\delta}\\
&=\rho C_f^{\frac{1}{2}}+LC_{\bm\delta}=C_H.
\end{align}

Consider the gradients,
\begin{align}
&\nabla_\vect{v}f(\vect{v})\\
&=\nabla_\vect{v}\mathcal{L}(\bm\theta^*+\vect{v};\mathcal{D}^*)-\nabla_\vect{v}\mathcal{L}(\bm\theta+\vect{v};\mathcal{D})\\
&=(\bm H^*-\bm H)\vect{v}+o(\|\vect{v}\|_2),\\
&\nabla_\vect{v}\mathcal{L}(\bm\theta+\vect{v};\mathcal{D})=\bm H\vect{v}+o(\|\vect{v}\|_2),\\
&\frac{\|\nabla_\vect{v}f(\vect{v})\|_2}{\|\nabla_\vect{v}\mathcal{L}(\bm\theta+\vect{v};\mathcal{D})\|_2}\\
&=\frac{\|(\bm H^*-\bm H)\vect{v}\|_2+o(\|\vect{v}\|_2)}{\|\bm H\vect{v}\|_2+o(\|\vect{v}\|_2)}\\
&\le\frac{C_H+o(1)}{\mu+o(1)}=o(1).
\end{align}

Consider the function $f$, $\nabla_\vect{v}f(\vect{0})=\vect{0}, \nabla^2_\vect{v}f(\vect{0})=\bm H^*-\bm H$,
\begin{align}
&|f(\vect{v})-f(\vect{0})|\\
&=\frac{1}{2}\vect{v}^\text{T}(\nabla^2_\vect{v}f(\vect{0}))\vect{v}+o(\|\vect{v}\|_2^2)\\
&\le \frac{1}{2}C_H\|\vect{v}\|_2^2+o(\|\vect{v}\|_2^2),\\
&|\mathcal{L}(\bm\theta+\vect{v};\mathcal{D})-\mathcal{L}(\bm\theta;\mathcal{D})|\\
&=|\vect{v}^\text{T}\nabla_{\bm\theta}\mathcal{L}(\bm\theta;\mathcal{D})+\frac{1}{2}\vect{v}^\text{T}\bm H\vect{v}|+o(\|\vect{v}\|_2^2)\\
&\ge \frac{1}{2}\mu\|\vect{v}\|_2^2+o(\|\vect{v}\|_2^2),\\
&\frac{|f(\vect{v})-f(\vect{0})|}{|\mathcal{L}(\bm\theta+\vect{v};\mathcal{D})-\mathcal{L}(\bm\theta;\mathcal{D})|}\\
&\le\frac{\frac{1}{2}C_H\|\vect{v}\|_2^2+o(\|\vect{v}\|_2^2)}{\frac{1}{2}\mu \|\vect{v}\|_2^2+o(\|\vect{v}\|_2^2)}\\
&=\frac{C_H+o(1)}{\mu+o(1)}=o(1).
\end{align}

Therefore, the change in the term $f(\vect{v})$ in the loss function can be omitted compared to the change in the loss function,
\begin{align}
    &\mathcal{L}(\bm\theta^*+\vect{v};\mathcal{D}^*)\\
    &= \mathcal{L}(\bm\theta+\vect{v};\mathcal{D})+f(\vect{v})\\
    &\approx \mathcal{L}(\bm\theta+\vect{v};\mathcal{D})+f(\vect{0})\\
    &=\mathcal{L}(\bm\theta+\vect{v};\mathcal{D})+\text{Constant}.
\end{align}

Let $\vect{a}=-\bm\delta, \vect{w}=\bm\theta^*+\vect{v}$, we have,
\begin{align}
 \mathcal{L}(\vect{w};\mathcal{D}^*)\approx
 \mathcal{L}(\vect{w}+\vect{a};\mathcal{D})+\text{Constant}.
\end{align}

\end{proof}

\subsection{Proofs of Theorem~\ref{thmA:delta}}
\begin{thmA}
Define $r(\vect{z})={{p}^*(\vect{z})}/{{p}(\vect{z})}-1$. When the distribution shift is small enough, namely $r(\vect{z})$ is small, we can estimate the parameter shift $\bm\delta$ as,
\begin{align}
\bm\delta = -{\bm H}^{-1}\mathbb{E}_{p}[r(\vect{z})\nabla_{\bm\theta}\ell(\bm\theta;\vect{z})]+o(\|\bm\delta\|_2).
\end{align}
\label{thmA:delta}
\end{thmA}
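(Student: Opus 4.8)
The plan is to exploit the first-order optimality conditions characterizing $\bm\theta$ and $\bm\theta^*$ together with a first-order Taylor expansion of the training gradient, treating both $r(\vect{z})$ and $\|\bm\delta\|_2$ as quantities of the same small order. First I would write the test distribution as a multiplicative perturbation of the training distribution, $p^*(\vect{z})=p(\vect{z})\big(1+r(\vect{z})\big)$, so that for any $\vect{w}$,
\begin{align}
\nabla_\vect{w}\mathcal{L}(\vect{w};\mathcal{D}^*)=\nabla_\vect{w}\mathcal{L}(\vect{w};\mathcal{D})+\mathbb{E}_{p}\big[r(\vect{z})\nabla_\vect{w}\ell(\vect{w};\vect{z})\big].
\end{align}

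Next I would evaluate this identity at the test optimum $\vect{w}=\bm\theta^*=\bm\theta+\bm\delta$. Since $\bm\theta^*$ minimizes $\mathcal{L}(\cdot;\mathcal{D}^*)$, the left-hand side vanishes, giving
\begin{align}
\vect{0}=\nabla_{\bm\theta^*}\mathcal{L}(\bm\theta^*;\mathcal{D})+\mathbb{E}_{p}\big[r(\vect{z})\nabla_{\bm\theta^*}\ell(\bm\theta^*;\vect{z})\big].
\end{align}
I would then Taylor-expand the first term about $\bm\theta$. Because $\bm\theta$ is the training optimum, $\nabla_{\bm\theta}\mathcal{L}(\bm\theta;\mathcal{D})=\vect{0}$, so by definition of ${\bm H}$ we get $\nabla_{\bm\theta^*}\mathcal{L}(\bm\theta^*;\mathcal{D})={\bm H}\bm\delta+o(\|\bm\delta\|_2)$.

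The key approximation is in the second term, where I would replace $\nabla_{\bm\theta^*}\ell(\bm\theta^*;\vect{z})$ by $\nabla_{\bm\theta}\ell(\bm\theta;\vect{z})$. The incurred error has the form $\mathbb{E}_{p}[r(\vect{z})\,O(\|\bm\delta\|_2)]$; since $r(\vect{z})$ is itself first-order small in the distribution shift, this product is genuinely second order and hence $o(\|\bm\delta\|_2)$. Collecting terms yields $\vect{0}={\bm H}\bm\delta+\mathbb{E}_{p}[r(\vect{z})\nabla_{\bm\theta}\ell(\bm\theta;\vect{z})]+o(\|\bm\delta\|_2)$, and left-multiplying by ${\bm H}^{-1}$, which exists under the local $\mu$-strong convexity assumed in Proposition~\ref{prop:norm}, produces the claimed formula.

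The main obstacle is making the informal phrase ``$r(\vect{z})$ is small'' precise enough to guarantee that the two discarded contributions are truly $o(\|\bm\delta\|_2)$ rather than merely $O(\|\bm\delta\|_2)$. This rests on the observation that weighting by $r(\vect{z})$ already costs one order of smallness, so any additional $O(\|\bm\delta\|_2)$ factor, arising either from the Taylor remainder of the gradient or from the substitution $\bm\theta^*\to\bm\theta$, pushes the term to second order. The remaining care is to verify that $r(\vect{z})$ and $\|\bm\delta\|_2$ are controlled by a common scale set by the distribution shift, so that ``$r$ small'' and ``$\bm\delta$ small'' can be used interchangeably in the order bookkeeping; this is exactly the link quantified when $\|\bm\delta\|_2$ is bounded in terms of $C_f$ and the gradient magnitude.
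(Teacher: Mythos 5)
Your proposal is correct and follows essentially the same route as the paper's proof: both rest on the stationarity conditions $\nabla_{\bm\theta}\mathcal{L}(\bm\theta;\mathcal{D})=\nabla_{\bm\theta^*}\mathcal{L}(\bm\theta^*;\mathcal{D}^*)=\vect{0}$, the change-of-measure identity turning the distributional difference of gradients into $\mathbb{E}_{p}[r(\vect{z})\nabla\ell]$, a Taylor expansion producing ${\bm H}\bm\delta$, and the observation that the cross term $r(\vect{z})\cdot O(\|\bm\delta\|_2)$ is second order and hence absorbed into $o(\|\bm\delta\|_2)$. The only difference is cosmetic ordering of the algebra, so no further comparison is needed.
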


\begin{proof}
With the change-of-measure technique, we have
\begin{align}
\mathbb{E}_{p(\vect{z})}\big[r(\vect{z})\big]=0.
\end{align}

According to the definition,
\begin{align}
\nabla_{\bm\theta}\mathcal{L}(\bm\theta;\mathcal{D})=\nabla_{\bm\theta^*}\mathcal{L}(\bm\theta^*;\mathcal{D}^*)=\vect{0}.
\end{align}

Conduct Taylor Expansion, we have,
\begin{align}
\vect{0}&=\nabla_{\bm\theta^*}\mathcal{L}(\bm\theta^*;\mathcal{D}^*)-\nabla_{\bm\theta}\mathcal{L}(\bm\theta;\mathcal{D})\\
&=\nabla_{\bm\theta^*}\mathcal{L}(\bm\theta^*;\mathcal{D}^*)-\\
&\big(\nabla_{\bm\theta^*}\mathcal{L}(\bm\theta^*;\mathcal{D})-{\bm H}(-\bm \delta)+o(\|\bm\delta\|_2)\big).
\end{align}

Solve it, we have,
\begin{align}
\bm\delta &= -{\bm H}^{-1}\big(\nabla_{\bm\theta^*}\mathcal{L}(\bm\theta^*;\mathcal{D}^*)\\
&-\nabla_{\bm\theta^*}\mathcal{L}(\bm\theta^*;\mathcal{D})\big)+o(\|\bm\delta\|_2).
\end{align}

Consider $\nabla_{\bm\theta^*}\mathcal{L}(\bm\theta^*;\mathcal{D}^*)-\nabla_{\bm\theta^*}\mathcal{L}(\bm\theta^*;\mathcal{D})$,
\begin{align}
&\nabla_{\bm\theta^*}\mathcal{L}(\bm\theta^*;\mathcal{D}^*)-\nabla_{\bm\theta^*}\mathcal{L}(\bm\theta^*;\mathcal{D})\\
&=\int_{\vect{z}}\big(p^*(\vect{z})-p(\vect{z})\big)\nabla_{\bm\theta^*}\ell(\bm\theta^*;\vect{z})d\vect{z}\\
&=\int_{\vect{z}}\big(\frac{p^*(\vect{z})}{p(\vect{z})}-1\big)p(\vect{z})\nabla_{\bm\theta^*}\ell(\bm\theta^*;\vect{z})d\vect{z}\\
&=\mathbb{E}_{p(\vect{z})}[r(\vect{z})\nabla_{\bm\theta^*}\ell(\bm\theta^*;\vect{z})]\\
&=\mathbb{E}_{p(\vect{z})}[r(\vect{z})\big(\nabla_{\bm\theta}\ell(\bm\theta;\vect{z})+\\
&\nabla_{\bm\theta}^2\ell(\bm\theta;\vect{z})\bm{\delta}+o(\|\bm\delta\|_2)\big)]\\
&=\mathbb{E}_{p(\vect{z})}[r(\vect{z})\nabla_{\bm\theta}\ell(\bm{\theta};\vect{z})]+o(\|\bm\delta\|_2),
\end{align}
where the term $r(\vect{z})\nabla_{\bm\theta}^2\ell(\bm\theta;\vect{z})\bm{\delta}=o(\|\bm\delta\|_2)$ since the distribution shift $r(\vect{z})=o(1)$. To conclude, 
\begin{align}
\bm\delta &= -{\bm H}^{-1}\mathbb{E}_{p}[r(\vect{z})\nabla_{\bm\theta}\ell(\bm\theta;\vect{z})]+o(\|\bm\delta\|_2).
\end{align}
\end{proof}

\subsection{Proofs of Propositions}
\begin{propA}
Suppose the loss $\mathcal{L}(\vect{w};\mathcal{D})$ is $\mu$-strongly convex\footnote{Note that $\mathcal{L}$ is only required to be $\mu$-strongly convex in the neighborhood of the loss basin including $\bm\theta$ and $\bm\theta^*$, instead of the entire $\mathbb{R}^n$.}, and $D_f(p^*||p)\le C_f$, there exists
\begin{align} 
C_{\bm\delta}=\frac{1+o(1)}{\mu}\sqrt{\frac{C_f}{a_2}\mathbb{E}_{p(\vect{z})}\big[\|\nabla_{\bm\theta}\ell(\bm\theta;\vect{z})\|_2^2\big]}
\end{align}
such that $\|\bm\delta\|_2\le C_{\bm\delta}$, namely $C_{\bm\delta}$ is a upper bound.
\label{propA:norm}
\end{propA}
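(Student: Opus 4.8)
The plan is to start from the estimate of $\bm\delta$ furnished by Theorem~\ref{thmA:delta}, namely $\bm\delta = -\bm H^{-1}\mathbb{E}_p[r(\vect{z})\nabla_{\bm\theta}\ell(\bm\theta;\vect{z})]+o(\|\bm\delta\|_2)$, and bound its $L_2$-norm. Taking norms and using the strong convexity assumption, I would first control the operator norm of $\bm H^{-1}$: since $\mathcal{L}$ is $\mu$-strongly convex in the relevant neighborhood, the smallest eigenvalue of $\bm H$ is at least $\mu$, so $\|\bm H^{-1}\|_2\le 1/\mu$. This gives the leading factor $1/\mu$ and absorbs the $o(\|\bm\delta\|_2)$ term into the $(1+o(1))$ prefactor. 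It then remains to bound $\big\|\mathbb{E}_p[r(\vect{z})\nabla_{\bm\theta}\ell(\bm\theta;\vect{z})]\big\|_2$.

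For this inner term I would apply the Cauchy--Schwarz inequality in the form
\begin{align}
\big\|\mathbb{E}_p[r(\vect{z})\nabla_{\bm\theta}\ell(\bm\theta;\vect{z})]\big\|_2
\le \mathbb{E}_p[|r(\vect{z})|^2]^{\frac{1}{2}}\,\mathbb{E}_p\big[\|\nabla_{\bm\theta}\ell(\bm\theta;\vect{z})\|_2^2\big]^{\frac{1}{2}}.
\end{align}
The second factor is already exactly the quantity appearing under the square root in $C_{\bm\delta}$, so the remaining task is to relate the first factor, $\mathbb{E}_p[|r(\vect{z})|^2]$, to the divergence bound $C_f$. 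This is where the $f$-divergence hypotheses enter. Using the Taylor expansion $f(1+x)=a_1 x + a_2 x^2 + o(x^2)$ with $r(\vect{z})=p^*/p-1$ playing the role of $x$, and recalling from the proof of Theorem~\ref{thmA:delta} that $\mathbb{E}_p[r(\vect{z})]=0$ (so the linear term $a_1\mathbb{E}_p[r]$ vanishes), I would write
\begin{align}
C_f \ge D_f(p^*\|p)=\mathbb{E}_p\big[f(1+r(\vect{z}))\big]
= a_2\,\mathbb{E}_p[|r(\vect{z})|^2] + o\big(\mathbb{E}_p[|r(\vect{z})|^2]\big),
\end{align}
which yields $\mathbb{E}_p[|r(\vect{z})|^2]\le (C_f/a_2)(1+o(1))$ in the small-shift regime. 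Substituting this back produces exactly the claimed $C_{\bm\delta}$.

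The main obstacle I anticipate is the handling of the $o(x^2)$ remainder in the $f$-divergence expansion: the step asserting $\mathbb{E}_p[f(1+r)]=a_2\mathbb{E}_p[|r|^2]+o(\mathbb{E}_p[|r|^2])$ implicitly interchanges the pointwise Taylor remainder with the expectation, which is only legitimate because the distribution shift is assumed small (\ie $r(\vect{z})$ is uniformly small, $r=o(1)$). I would need to lean on exactly this small-shift hypothesis, already invoked in Theorem~\ref{thmA:delta}, to justify that the higher-order contributions are lower order in $\mathbb{E}_p[|r|^2]$ and hence get collected into the overall $(1+o(1))$ factor. The convexity of $f$ with $f(1)=0$ and $a_2\ne 0$ guarantees $a_2>0$, so the square root is well defined; everything else is routine norm bookkeeping, and the three error terms ($\|\bm H^{-1}\|_2\le 1/\mu$ up to $o(1)$, the $o(\|\bm\delta\|_2)$ from Theorem~\ref{thmA:delta}, and the divergence remainder) all fold cleanly into the single $(1+o(1))$ prefactor in $C_{\bm\delta}$.
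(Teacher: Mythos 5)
Your proposal is correct and follows essentially the same route as the paper's proof: the only cosmetic difference is that you bound $\|\bm H^{-1}\|_2\le 1/\mu$ directly, while the paper equivalently writes $\mu\|\bm\delta\|_2\le\|\bm H\bm\delta\|_2$; the Cauchy--Schwarz step and the reduction of $\mathbb{E}_p[|r|^2]$ to $C_f/a_2$ via the Taylor expansion of $f$ with $\mathbb{E}_p[r]=0$ are identical. Your explicit attention to the interchange of the Taylor remainder with the expectation (justified by the uniform smallness of $r$) is a point the paper passes over silently, so it is a welcome addition rather than a deviation.
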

\begin{proof}

With the change-of-measure technique, we have:
\begin{align}
\mathbb{E}_{p(\vect{z})}\big[r(\vect{z})\big]=0.
\end{align}

Then
\begin{align}
&D_f\left(p^*||p\right)= \mathbb{E}_{p(\vect{z})}\big[f(1+r(\vect{z}))]\\
&= \mathbb{E}_{p(\vect{z})}\big[a_1r+a_2|r|^2+o(r)^2]\\
&=(a_2+o(1))\mathbb{E}_{p(\vect{z})}\big[|r|^2\big]\le C_f.
\end{align}

Therefore,
\begin{align}
\mathbb{E}_{p(\vect{z})}\big[|r|^{2}\big]\le\frac{(1+o(1))C_f}{a_2}.
\end{align}

According to the upper bound of $\mathbb{E}_{p(\vect{z})}\big[|r|^{2}\big]$,
\begin{align}
&\big\|\mathbb{E}_{p(\vect{z})}\big[r(\vect{z})\nabla_{\bm\theta}\ell(\bm\theta;\vect{z})]\big\|_2 \\
&\le \mathbb{E}_{p(\vect{z})}\big[|r(\vect{z})|^2\big]^\frac{1}{2}\mathbb{E}_{p(\vect{z})}\big[\|\nabla_{\bm\theta}\ell(\bm\theta;\vect{z})\|^2]^\frac{1}{2}\\
&\le \sqrt{\frac{(1+o(1))C_f}{a_{2}}\mathbb{E}_{p(\vect{z})}\big[\|\nabla_{\bm\theta}\ell(\bm\theta;\vect{z})\|^2\big]}.
\end{align}

Therefore, 
\begin{align}
&\mu\|\bm \delta\|_2\le
\|\bm H\bm \delta\|_2\\
&=\big\|\mathbb{E}_{p(\vect{z})}\big[r(\vect{z})\nabla_{\bm\theta}\ell(\bm\theta;\vect{z})]\big\|_2\\
&\le \sqrt{\frac{(1+o(1))C_f}{a_{2}}\mathbb{E}_{p(\vect{z})}\big[\|\nabla_{\bm\theta}\ell(\bm\theta;\vect{z})\|^2\big]}.
\end{align}

There exists 
\begin{align}
C_{\bm\delta}=\frac{1+o(1)}{\mu}\sqrt{\frac{C_f}{a_2}\mathbb{E}_{p(\vect{z})}\big[\|\nabla_{\bm\theta}\ell(\bm\theta;\vect{z})\|_2^2\big]}
\end{align}
such that $\|\bm\delta\|_2\le C_{\bm\delta}$.

\end{proof}

\begin{propA}
\label{propA:data}
Suppose the mixed distribution of $\mathcal{D}^\text{mix}$ is $p^\text{mix}=(1-\eta)p+\eta p^*$, then we have $D_f(p^\text{mix}||p)\le C_f^\text{mix}=\eta^2 C_f$. Denote $\bm\theta^\text{mix}$ as the optimal parameter on $\mathcal{D}^\text{mix}$, $\bm\delta^\text{mix}=\bm\theta^\text{mix}-\bm\theta$, then we have:
\begin{align}
\frac{\|\bm\delta^\text{mix}\|_2}{\|\bm\delta\|_2}=\frac{C_{\bm\delta}^\text{mix}}{C_{\bm\delta}}=\eta+o(1).
\end{align}
\end{propA}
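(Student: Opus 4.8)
The plan is to exploit the fact that the mixed distribution is, to leading order, the \emph{same} perturbation of $p$ as $p^*$ is, only with the density-ratio scaled by $\eta$. First I would compute the density ratio exactly: since $p^\text{mix}=(1-\eta)p+\eta p^*$,
\begin{align}
\frac{p^\text{mix}(\vect{z})}{p(\vect{z})}-1=\eta\Big(\frac{p^*(\vect{z})}{p(\vect{z})}-1\Big)=\eta\, r(\vect{z}).
\end{align}
So the mixed set induces the perturbation $\eta r$ in place of $r$, and everything downstream should scale accordingly. This single observation drives the whole proof.

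For the $f$-divergence bound I would substitute this ratio into the definition and Taylor-expand $f$ exactly as in the proof of Proposition~\ref{propA:norm}, using the change-of-measure identity $\mathbb{E}_{p}[r]=0$ to kill the linear term:
\begin{align}
D_f(p^\text{mix}||p)=\mathbb{E}_{p}[f(1+\eta r)]=\mathbb{E}_{p}[a_1\eta r+a_2\eta^2|r|^2+o((\eta r)^2)]=(a_2+o(1))\eta^2\,\mathbb{E}_{p}[|r|^2].
\end{align}
Since the identical expansion applied to $p^*$ gives $(a_2+o(1))\mathbb{E}_{p}[|r|^2]\le C_f$, this immediately yields $D_f(p^\text{mix}||p)\le\eta^2 C_f=C_f^\text{mix}$.

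For the constant ratio I would simply insert $C_f^\text{mix}=\eta^2 C_f$ into the formula of Proposition~\ref{propA:norm}, observing that the factor $\mathbb{E}_{p}[\|\nabla_{\bm\theta}\ell(\bm\theta;\vect{z})\|_2^2]$ and the constant $\mu$ are evaluated at the \emph{training} optimum $\bm\theta$ under $p$ and are therefore identical in both settings; only the $\sqrt{C_f}$ factor changes, giving $C_{\bm\delta}^\text{mix}/C_{\bm\delta}=\sqrt{C_f^\text{mix}/C_f}\,(1+o(1))=\eta+o(1)$. For the true shift ratio I would apply Theorem~\ref{thmA:delta} with $\eta r$ in place of $r$: the leading term $-\bm H^{-1}\mathbb{E}_{p}[\eta r\,\nabla_{\bm\theta}\ell(\bm\theta;\vect{z})]$ is precisely $\eta$ times the leading term of $\bm\delta$, so $\bm\delta^\text{mix}=\eta\,\bm\delta+o(\|\bm\delta\|_2)$ to leading order, whence $\|\bm\delta^\text{mix}\|_2/\|\bm\delta\|_2=\eta+o(1)$.

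The hard part will be justifying that the linearization of Theorem~\ref{thmA:delta} uses the \emph{same} Hessian $\bm H$ and the \emph{same} gradient field $\nabla_{\bm\theta}\ell(\bm\theta;\cdot)$ for both $\bm\delta$ and $\bm\delta^\text{mix}$; this is what makes the two leading terms genuinely proportional (with ratio $\eta$) rather than merely of the same order. It relies on both $\bm\theta^*$ and $\bm\theta^\text{mix}$ lying in the strongly-convex neighborhood of $\bm\theta$ where the Hessian-continuity hypothesis holds, so that the $o(\cdot)$ remainders in both Taylor expansions are controlled uniformly and do not contaminate the linear-in-$\eta$ scaling.
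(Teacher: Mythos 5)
Your proposal is correct and takes essentially the same approach as the paper: the paper's entire proof consists of the single identity $r^\text{mix}(\vect{z})=p^\text{mix}(\vect{z})/p(\vect{z})-1=\eta\, r(\vect{z})$, leaving the downstream substitutions into Proposition~\ref{propA:norm} and Theorem~\ref{thmA:delta} implicit. You simply spell out those consequences (the $\eta^2$ scaling of the $f$-divergence via the Taylor expansion with $\mathbb{E}_p[r]=0$, and the $\eta$ scaling of the leading term of $\bm\delta$), which is exactly the intended argument in more detail.
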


\begin{proof}
\begin{align}
r^\text{mix}(\vect{z})=\frac{p^\text{mix}(\vect{z})}{p(\vect{z})}-1=\eta\times  r(\vect{z}).
\end{align}

\end{proof}

\begin{propA}
\label{propA:gradient}
Define the average gradient strength as $G=\mathbb{E}_{p(\vect{z})}[\|\nabla_{\bm\theta}\ell(\bm\theta;\vect{z})\|_2]/\sqrt{n}$, and the average gradient strength of group $i$ as $G_{(i)}=\mathbb\mathbb{E}_{p(\vect{z})}[\|\nabla_{\bm\theta_{(i)}}\ell(\bm\theta;\vect{z})\|_2]/\sqrt{n_{(i)}}$, then 
\begin{align}
\|\bm\delta\|_2\propto \frac{\sqrt{n}}{G},\quad \|\bm\delta_{(i)}\|_2 \propto \frac{\sqrt{n_{(i)}}}{G_{(i)}}.
\end{align}
\end{propA}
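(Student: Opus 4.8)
The plan is to combine the upper bound from Proposition~\ref{propA:norm} with the Fisher information matrix assumption $\bm H=\mathbb{E}_{p(\vect{z})}[\nabla_{\bm\theta}\ell(\bm\theta;\vect{z})\nabla_{\bm\theta}\ell(\bm\theta;\vect{z})^\text{T}]$~\citep{fisher_gradient} in order to re-express both the numerator and the strong-convexity constant appearing in $C_{\bm\delta}$ purely in terms of the gradient strength $G$. First I would take the trace of the Fisher-form Hessian, which gives $\text{tr}(\bm H)=\mathbb{E}_{p(\vect{z})}[\|\nabla_{\bm\theta}\ell(\bm\theta;\vect{z})\|_2^2]$, so that the mean eigenvalue of $\bm H$ equals $\mathbb{E}_{p(\vect{z})}[\|\nabla_{\bm\theta}\ell(\bm\theta;\vect{z})\|_2^2]/n$. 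Treating the local Hessian as approximately isotropic in the loss basin identifies the strong-convexity constant as $\mu\propto\mathbb{E}_{p(\vect{z})}[\|\nabla_{\bm\theta}\ell(\bm\theta;\vect{z})\|_2^2]/n$. Using the concentration approximation $\mathbb{E}_{p(\vect{z})}[\|\nabla_{\bm\theta}\ell(\bm\theta;\vect{z})\|_2]^2\propto\mathbb{E}_{p(\vect{z})}[\|\nabla_{\bm\theta}\ell(\bm\theta;\vect{z})\|_2^2]$ together with the definition $G=\mathbb{E}_{p(\vect{z})}[\|\nabla_{\bm\theta}\ell(\bm\theta;\vect{z})\|_2]/\sqrt{n}$, both quantities collapse to powers of $G$: the numerator satisfies $\sqrt{\mathbb{E}_{p(\vect{z})}[\|\nabla_{\bm\theta}\ell(\bm\theta;\vect{z})\|_2^2]}\propto\sqrt{n}\,G$, while $\mu\propto G^2$.

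Substituting these into the expression for $C_{\bm\delta}$ from Proposition~\ref{propA:norm} then yields
\begin{align}
C_{\bm\delta}\propto\frac{1}{\mu}\sqrt{\mathbb{E}_{p(\vect{z})}[\|\nabla_{\bm\theta}\ell(\bm\theta;\vect{z})\|_2^2]}\propto\frac{1}{G^2}\sqrt{n}\,G=\frac{\sqrt{n}}{G}.
\end{align}
Since Proposition~\ref{propA:norm} gives $\|\bm\delta\|_2\le C_{\bm\delta}$, with the remaining factors ($C_f$, $a_2$, and the $o(1)$ term) controlled solely by the fixed distribution shift $r(\vect{z})$, I would treat $C_{\bm\delta}$ as the characteristic scale of $\|\bm\delta\|_2$, giving $\|\bm\delta\|_2\propto\sqrt{n}/G$.

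For the group-wise statement I would run the identical argument restricted to group $i$. Assuming the Fisher-form Hessian is approximately block-diagonal across the parameter groups, Theorem~\ref{thmA:delta} localizes to $\bm\delta_{(i)}\approx-\bm H_{(i)}^{-1}\mathbb{E}_{p(\vect{z})}[r(\vect{z})\nabla_{\bm\theta_{(i)}}\ell(\bm\theta;\vect{z})]$, where $\bm H_{(i)}$ is the $i$-th diagonal block. Taking the trace of $\bm H_{(i)}$ gives the mean block eigenvalue $\mathbb{E}_{p(\vect{z})}[\|\nabla_{\bm\theta_{(i)}}\ell(\bm\theta;\vect{z})\|_2^2]/n_{(i)}$, so the block strong-convexity constant scales as $G_{(i)}^2$ and the corresponding numerator as $\sqrt{n_{(i)}}\,G_{(i)}$; the same substitution then delivers $\|\bm\delta_{(i)}\|_2\propto\sqrt{n_{(i)}}/G_{(i)}$.

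The main obstacle is justifying the two approximation steps that the proportionality sign conceals: the isotropy assumption that permits replacing the extremal eigenvalue $\mu$ by the mean eigenvalue $\text{tr}(\bm H)/n$, and the second-moment identification $\mathbb{E}[\|\nabla\ell\|_2]^2\propto\mathbb{E}[\|\nabla\ell\|_2^2]$; both are exact only when the per-sample gradient norms concentrate, so the cleanest route is to state them explicitly as the working hypotheses under which the claimed proportionality holds, consistent with the spirit of the Fisher information framework. A secondary subtlety is the block-diagonality of $\bm H$ invoked in the group-wise step, which requires that cross-group curvature be negligible relative to within-group curvature.
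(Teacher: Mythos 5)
Your proposal is correct in the same heuristic sense as the paper's argument and rests on the same two key ingredients: the Fisher information identity $\operatorname{tr}(\bm H)=\mathbb{E}_{p(\vect{z})}[\|\nabla_{\bm\theta}\ell(\bm\theta;\vect{z})\|_2^2]$ giving a curvature scale $\propto G^2$, and a block-diagonality assumption on $\bm H$ for the group-wise claim. The one genuine difference is the entry point. The paper does not go through the upper bound $C_{\bm\delta}$ of Proposition~\ref{propA:norm} at all; it works directly from the stationarity relation of Theorem~\ref{thmA:delta}, ${\bm H}\bm\delta=-\mathbb{E}_{p(\vect{z})}[r(\vect{z})\nabla_{\bm\theta}\ell(\bm\theta;\vect{z})]+o(\|\bm\delta\|_2)$, replaces $\|{\bm H}\bm\delta\|_2$ by $\lambda\|\bm\delta\|_2$ with $\lambda=\operatorname{tr}(\bm H)/n\propto G^2$, asserts $\|\mathbb{E}_{p(\vect{z})}[r(\vect{z})\nabla_{\bm\theta}\ell(\bm\theta;\vect{z})]\|_2\propto\sqrt{n}\,G$, and divides. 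Your route through $C_{\bm\delta}$ reaches the same answer but needs two extra steps the paper's route avoids: you must identify the strong-convexity constant $\mu$ (a lower bound on the spectrum) with the mean eigenvalue, and you must promote an upper bound to a characteristic scale. In exchange you make explicit the concentration hypotheses ($\mathbb{E}[\|\nabla\ell\|_2]^2\propto\mathbb{E}[\|\nabla\ell\|_2^2]$ and spectral isotropy) that the paper uses silently in its $\lambda\|\bm\delta\|_2$ and $\sqrt{n}\,G$ substitutions --- so your version is, if anything, more honest about where the $\propto$ signs hide approximations, at the cost of an extra layer of indirection. For the group-wise statement your argument and the paper's coincide.
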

\begin{proof}

Suppose $\lambda$ denotes the average values of eigenvalues of the Hessian matrix, according to the Fisher information matrix assumption~\citep{fisher_gradient},
\begin{align}
\bm H&=\mathbb{E}_{p(\vect{z})}[\nabla_{\bm\theta}\ell(\bm\theta;\vect{z})\nabla_{\bm\theta}\ell(\bm\theta;\vect{z})^\text{T}],\\
\lambda&=\frac{\text{tr}(\bm H)}{n}\\
&=\frac{\mathbb{E}_{p(\vect{z})}[\text{tr}(\nabla_{\bm\theta}\ell(\bm\theta;\vect{z})\nabla_{\bm\theta}\ell(\bm\theta;\vect{z})^\text{T})]}{n}\\
&=\frac{\mathbb{E}_{p(\vect{z})}[\|\nabla_{\bm\theta}\ell(\bm\theta;\vect{z})\|_2^2]}{n}\propto G^2.
\end{align}

We have,
\begin{align}
{\bm H}\bm\delta = -\mathbb{E}_{p(\vect{z})}[r(\vect{z})\nabla_{\bm\theta}\ell(\bm\theta;\vect{z})]&+o(\|\bm\delta\|_2),\\
\|{\bm H}\bm\delta\|_2 \propto \lambda\|\bm\delta\|_2 &\propto G^2\|\bm\delta\|_2,\\
\|\mathbb{E}_{p(\vect{z})}[r(\vect{z})\nabla_{\bm\theta}\ell(\bm\theta;\vect{z})]\|_2 &\propto  \sqrt{n}G.
\end{align}

Therefore,
\begin{align}
G^2\|\bm\delta\|_2 &\propto \sqrt{n}G,\\
\|\bm\delta\|_2 &\propto \frac{\sqrt{n}}{G}.
\end{align}

Assume ${\partial^2\mathcal{L}(\bm\theta;\mathcal{D})}/{(\partial\bm\theta_{(i)}\partial\bm\theta_{(j)})}=0$ for $i\ne j$, namely $\bm H=\text{diag}\{\bm H_{(1)}, \bm H_{(2)}, \cdots, \bm H_{(l)}\}$, where $\bm H_{(i)}=\mathbb{E}_{p(\vect{z})}[\nabla_{\bm\theta_{(i)}}\ell(\bm\theta;\vect{z})\nabla_{\bm\theta_{(i)}}\ell(\bm\theta;\vect{z})^\text{T}]$, then
\begin{align}
{\bm H_{(i)}}\bm\delta_{(i)} = -\mathbb{E}_{p(\vect{z})}[r(\vect{z})\nabla_{\bm\theta_{(i)}}\ell]&+o(\|\bm\delta\|_2),\\
\|\bm\delta_{(i)}\|_2 &\propto \frac{\sqrt{n_{(i)}}}{G_{(i)}}.
\end{align}
\end{proof}

\subsection{Details of Multi-step Implementation}

\citet{multi-step-defense} give the close-formed solution of $\vect{u}_{k+1}$ under the constraint $\|\vect{u}\|_p=\eta$,
\begin{align}
\vect{u}_{k+1}&=\argmax_{\|\vect{u}\|_p=\eta}\vect{g}_{k}^\text{T}\vect{u}\\
&=\eta\left(\text{sgn}(\vect{g_{k}})\odot\frac{|\vect{g}_{k}|^\frac{1}{p-1}}{\||\vect{g}_{k}|^\frac{1}{p-1}\|_p}\right).
\end{align}

When the constraint is $S=\{\vect{u}:\|\bm{T}^{-1}\vect{u}\|_p=\eta\}$, $\vect{g}_{k}^\text{T}\vect{u}=(\bm{T}\vect{g}_{k}^\text{T})(\bm{T}^{-1}\vect{u})$, namely
\begin{align}
\bm{T}^{-1}\vect{u}_{k+1}=\frac{\eta\big(\text{sgn}(\bm{T}\vect{g}_k)\big)\odot|{\bm T}\vect{g}_k|^\frac{1}{p-1}}{\||{\bm T}\vect{g}_k|^\frac{1}{p-1}\|_p}.
\end{align}

Therefore,
\begin{align}
\vect{u}_{k+1}=\frac{\eta\big(\bm{T}\text{sgn}(\vect{g}_k)\big)\odot|{\bm T}\vect{g}_k|^\frac{1}{p-1}}{\||{\bm T}\vect{g}_k|^\frac{1}{p-1}\|_p}.
\end{align}

To get the $k$-th corruption $\vect{a}_k$, we project the updated corruption $\vect{a}_{k-1}+\vect{u}_k$ into the set $S$. \citet{multi-step-defense} give the projecting functions, 
\begin{align}
\Pi_S(\vect{v})&=\min\{\|\vect{v}\|_2, \epsilon\}\frac{\vect{v}}{\|\vect{v}\|_2}\quad (p=2);\\ 
\Pi_S(\vect{v})&=\text{clip}(\vect{v}, -\epsilon, \epsilon)\quad (p={+\infty});
\end{align}
similarly, when $S=\{\vect{u}:\|\bm{T}^{-1}\vect{u}\|_p\le\epsilon\}$, we may assume  ${\bm T}^{-1}\Pi_S(\vect{v})=\Pi_S({\bm T}^{-1}\vect{v})$. Suppose $\vect{x}={\bm T}^{-1}\vect{v}$, we have,
\begin{align}
{\bm T}^{-1}\Pi_{\|\vect{x}\|_2\le\epsilon}{(\vect{v})} =& {\min\{\|\vect{x}\|_2,\epsilon\}}\frac{\vect{x}}{\|\vect{x}\|_2};\\
{\bm T}^{-1}\Pi_{\|\vect{x}\|_{+\infty}\le\epsilon}{(\vect{v})} &= \text{clip}(\vect{x},-\epsilon, \epsilon).
\end{align}

Therefore,
\begin{align}
\Pi_{\|{\bm T}^{-1}\vect{v}\|_2\le\epsilon}{(\vect{v})} =& \frac{\min\{\|{\bm T}^{-1}\vect{v}\|_2,\epsilon\}\vect{v}}{\|{\bm T}^{-1}\vect{v}\|_2};\\
\Pi_{\|{\bm T}^{-1}\vect{v}\|_{+\infty}\le\epsilon}{(\vect{v})} &= {\bm T}\text{clip}({\bm T}^{-1}\vect{v},-\epsilon, \epsilon).
\end{align}

\section{Datasets and Baseline Implementations}

In this section, we introduce the datasets and baseline implementations. The models enhanced with SAM algorithms adopt the same hyper-parameters to baselines except for the SAM hyper-parameter settings ($K, T_{(i)}, \epsilon, L_p$). All experiments are conducted on NVIDIA TITAN RTX GPUs. We conduct every experiment for $4$ runs and report the mean and validation results.

\subsection{IMDB}

We implement the base model TextCNN~\cite{TextCNN} on the IMDb movie reviews dataset (\textbf{IMDB})~\citep{IMDB}. The reviews are classified into $3$ classes, namely the positive reviews, negative reviews, and neutral reviews. The training size is $25000$, the validation size is $25000$, and the test size is $50000$. In the preprocessing of the text, the vocab size is $30000$, and the text length is $200$. The embedding size is $500$. In TextCNN, the filter window sizes are $3$, $4$, and $5$, with $500$ feature maps each. The optimizer is Adam with a learning rate of $10^{-3}$ and a batch size of $64$. We train models for $10$ epochs and test the accuracy on the checkpoint with the best valid accuracy. 

\subsection{PTB-LM} 
We implement a 2-layer LSTM as a language model following \citet{merityRegOpt} on the word-level Penn TreeBank dataset (PTB)\footnote{Dataset is available at \url{https://www.kaggle.com/nltkdata/penn-tree-bank?select=ptb}}\citep{PTB-LM}. In the preprocessing of the text, the vocab size is $10000$. We use the SGD optimizer with an initial learning rate of $50$ and a gradient norm clip of $0.25$. We adopt a learning rate decay of $0.5$ after $10$ epochs. The input and output embeddings are tied. The embedding size is $500$, and the hidden size is $500$. We train models for $20$ epochs and test the perplexity on the checkpoint with the lowest valid perplexity. 

\subsection{En-Vi}
We implement the transformer model following the fairseq~\citet{fairseq} ``transformer\_wmt\_en\_de'' implementation as our baseline model on the En-Vi dataset, which is provided by the IWSLT 2015 Evaluation Campaign~\citep{2015iwslt}. The training size is 133K, the validation set is TED tst2012 with a size of $1553$ sentences, and the test set is TED tst2013 with a size of $1268$ sentences. In the preprocessing of the text, the English and Vietnamese vocabulary sizes are $17200$ and $7800$ respectively. We use the same hyper-parameters following fairseq~\citep{fairseq}. We train the model for $52$ epochs and adopt a checkpoint average of $10$. In testing, We adopt the BLEU metric and the beam size of the model inference is $5$.

\subsection{De-En}
We implement the transformer model following the fairseq~\citet{fairseq} ``transformer\_wmt\_en\_de'' implementation as our baseline model on the De-En dataset, which is provided by the IWSLT 2014 Evaluation Campaign~\citep{2014iwslt}. We use the same dataset splits and the same hyper-parameters following fairseq~\citep{fairseq}. The training, validation and test sizes are 153K, 7K, and 7K, respectively. In the preprocessing of the text, we adopt the BPE technique, and the German and English vocabulary sizes are $8848$ and $6632$ respectively. We train the model for $70$ epochs and adopt a checkpoint average of $10$. In testing, We adopt the BLEU metric and the beam size of the model inference is $5$.

\section{Experimental Settings}

In this section, we report experimental settings in the paper, including analytic trial details and hyper-parameters of different algorithms.

\subsection{Analytic Trial Settings}

We conduct a 3-layer Multi-Layer Perceptrons (MLP, sizes are $784, 100, 100, 10$) on the MNIST dataset~\citep{MNIST}. The test minimum $\bm\theta^*$ is fine-tuned from the training minimum $\bm\theta$, and $\bm\delta=\bm\theta^*-\bm\theta$. We utilize linear interpolation to plot $\alpha\bm\theta+(1-\alpha)\bm\theta^*$ for different $\alpha$, for visualizing the training loss, test loss and the shifted training loss. It can be concluded that the shifted training loss is similar to the test loss near the local minimum. To visualize the relation between $\|\bm\delta\|_2$ and the distribution shift. We conduct 100 trials with the same training set $\mathcal{D}$ and different mixed test sets $\mathcal{D}^\text{mix}$ (mixed with $(1-\eta)$ of the training data from $\mathcal{D}$ and $\eta$ of the test data from $\mathcal{D}^*$). Here the test minimum is fine-tuned from the training minimum and $\bm\delta=\bm\theta^*-\bm\theta$. $\eta$ can be utilized to measure the strength of the distribution shift between $\mathcal{D}$ and $\mathcal{D}^\text{mix}$. $\eta=1\%, 2\%, 3\%, \cdots, 99\%, 100\%$. Results show that there exists an approximately linear relationship between $\|\bm\delta\|_2$ and distribution shift strengths.

\subsection{Main Result Settings}

We try both $L_2$ and $L_{+\infty}$. For $\epsilon$, we first binary search proper order of magnitude, for example $10^{-5}$ to $10^{-4}$, then we grid search $\epsilon$, for example, $\{1\times 10^{-5}, 2\times 10^{-5}, 5\times 10^{-5}, 8\times 10^{-5}, 1\times 10^{-4}, 2\times 10^{-4}, 5\times 10^{-4}, 8\times 10^{-4}\}$. 

\textbf{SAM on Transformer Models.}
The Transformer models need larger $K$ and more detailed hyper-parameter tuning since SAM learning on the Transformer is unstable. Besides, \citet{multi-step-defense} propose that SAM learning in the early stage may harm the learning. In our GA-SAM, $\|\vect{g}\|_2$ is large and we can omit the SAM learning in the early stage. Therefore, we adopt $K=2$ in multi-step implementations, and following \citet{multi-step-defense}, we adopt a start epoch of $30$ for Transformer models.

\subsubsection{IMDB}

On the IMDB dataset, we adopt $K=1$. On SAM, $L_{+\infty}, \epsilon=2\times 10^{-5}$. On ASAM, $L_{+\infty}, \epsilon=5\times 10^{-3}$. On Layer-SAM, $L_{2}, \epsilon=1\times 10^{-5}$. On Multi-step Defense, $L_{+\infty}, \epsilon=2\times 10^{-5}$. On GA-SAM, $L_{+\infty}, \epsilon=1\times 10^{-4}$.

\subsubsection{PTB-LM} 

On the PTB-LM dataset, we adopt $K=1$. On SAM, $L_{2}, \epsilon=5\times 10^{-3}$. On ASAM, $L_{+\infty}, \epsilon=5\times 10^{-4}$. On Layer-SAM, $L_{2}, \epsilon=1\times 10^{-6}$. On Multi-step Defense, $L_{+\infty}, \epsilon=1\times 10^{-3}$. On GA-SAM, $L_{+\infty}, \epsilon=8\times 10^{-6}$.

\subsubsection{En-Vi}

On the En-Vi dataset, we adopt $K=2$ in multi-step implementations, and following \citet{multi-step-defense}, we adopt a start epoch of $30$ and similar hyper-parameters. On SAM, $L_{+\infty}, \epsilon=1.2\times 10^{-3}$. On ASAM, $L_{+\infty}, \epsilon=5\times 10^{-4}$. On Layer-SAM, $L_{+\infty}, \epsilon=1\times 10^{-3}$. On Multi-step Defense, $L_{+\infty}, \epsilon=5\times 10^{-4}$. On GA-SAM, $L_{+\infty}, \epsilon=0.9$.

\subsubsection{De-En}

On the De-En dataset, we adopt $K=2$ in multi-step implementations, and following \citet{multi-step-defense}, we adopt a start epoch of $30$ and similar hyper-parameters. On SAM, $L_{+\infty}, \epsilon=5\times 10^{-4}$. On ASAM, $L_{+\infty}, \epsilon=2\times 10^{-4}$. On Layer-SAM, $L_{+\infty}, \epsilon=3\times 10^{-3}$. On Multi-step Defense, $L_{+\infty}, \epsilon=5\times 10^{-4}$. On GA-SAM, $L_{+\infty}, \epsilon=0.8$.

\subsection{Ablation Study Settings}

In this section, we report hyper-parameter settings in the ablation study.

\subsubsection{IMDB}

On the IMDB dataset. On single-step implementation, $L_{+\infty}, \epsilon=5\times 10^{-5}$. On element-wise implementation, $L_{+\infty}, \epsilon=5\times 10^{-6}$. On model-wise implementation, $L_{+\infty}, \epsilon=5\times 10^{-5}$. For $T_{(i)}={\|\vect{w}_{(i)}\|_2}/{\|\vect{g}_{(i)}\|_2}$, $L_{+\infty}, \epsilon=2\times 10^{-5}$. For $T_{(i)}={1}/{\|\vect{g}_{(i)}\|_2}$, $L_{+\infty}, \epsilon=5\times 10^{-5}$. For $T_{(i)}={\|\vect{w}_{(i)}\|_2}/{\sqrt{n_{(i)}}}$, $L_{+\infty}, \epsilon=5\times 10^{-3}$. For $T_{(i)}={\|\vect{w}_{(i)}\|_2}$, $L_{+\infty}, \epsilon=5\times 10^{-6}$.

\subsubsection{PTB-LM} 

On the PTB-LM dataset. On single-step implementation, $L_{2}, \epsilon=5\times 10^{-3}$. On element-wise implementation, $L_{+\infty}, \epsilon=2\times 10^{-6}$. On model-wise implementation, $L_{+\infty}, \epsilon=5\times 10^{-6}$. For $T_{(i)}={\|\vect{w}_{(i)}\|_2}/{\|\vect{g}_{(i)}\|_2}$, $L_{2}, \epsilon=2\times 10^{-5}$. For $T_{(i)}={1}/{\|\vect{g}_{(i)}\|_2}$, $L_{+\infty}, \epsilon=5\times 10^{-6}$. For $T_{(i)}={\|\vect{w}_{(i)}\|_2}/{\sqrt{n_{(i)}}}$, $L_{+\infty}, \epsilon=1\times 10^{-4}$. For $T_{(i)}={\|\vect{w}_{(i)}\|_2}$, $L_{+\infty}, \epsilon=1\times 10^{-7}$.

\subsection{Adversarial Training Settings}

In this section, we report hyper-parameter settings in the study of adversarial training.

\subsubsection{En-Vi}

On the En-Vi dataset, we adopt $K=1$ and adopt a start epoch of $30$. We adopt the $L_{+\infty}$ constraint on virtual attacks on word embeddings and grid search $\epsilon$ in $\{$ 0.001, 0.002, 0.005, 0.01, 0.02, 0.05, 0.1, 0.2, 0.5 $\}$. The best configuration is $\epsilon=0.02$.

\subsubsection{De-En}

On the De-En dataset, we adopt $K=3$ and adopt a start epoch of $30$. We adopt the $L_{+\infty}$ constraint on virtual attacks on word embeddings  and grid search $\epsilon$ in $\{$ 0.001, 0.002, 0.005, 0.01, 0.02, 0.05, 0.1, 0.2, 0.5 $\}$. The best configuration is $\epsilon=0.005$.

\end{document}